\documentclass{article}
\usepackage[final]{cpal_2026}

\usepackage{url}
\usepackage{amsmath, amsfonts, amsthm}
\usepackage{csquotes, graphicx, xcolor, float}
\usepackage{bbm}
\usepackage{xcolor}
\usepackage{mathtools}
\usepackage{booktabs}

\theoremstyle{definition}

\theoremstyle{plain}

\newtheorem{proposition}{Proposition}
\theoremstyle{theorem}

\theoremstyle{remark}
\newtheorem{remark}{Remark}
\theoremstyle{corollary}

\usepackage{hyperref}
\hypersetup{colorlinks,
linkcolor={blue!80!black},
citecolor={blue!80!black},
urlcolor={blue!80!black}}

\newtheorem{theorem}{Theorem}
\newtheorem*{theorem*}{Theorem}

\newtheorem{lemma}{Lemma}
\newtheorem*{lemma*}{Lemma}

\usepackage{tikz}
\usetikzlibrary{arrows.meta}

\usepackage[capitalize,nameinlink]{cleveref}
\crefname{section}{Sec.}{Sec.}
\crefname{thm}{Thm.}{Theorem}
\crefname{appendix}{App.}{Appendices}
\crefname{algorithm}{Alg.}{Algorithms}
\crefname{equation}{Eq.}{Eqs.}
\crefname{figure}{Fig.}{Figs.}
\crefname{prop}{Prop.}{Props.}
\creflabelformat{equation}{#2\textup{#1}#3} 

\title{A Stein Identity for $q$-Gaussians\\ with Bounded Support}

\author{%
 Sophia Sklaviadis\textsuperscript{*,$\dagger$,1}, ~ Thomas M\"{o}llenhoff\textsuperscript{ *,2}, ~André F. T. Martins\textsuperscript{$\dagger$,3}, \\ ~Mário A. T. Figueiredo\textsuperscript{$\dagger$,4}, ~Mohammad Emtiyaz Khan\textsuperscript{*,5} \\ \ \\
  \textsuperscript{$*$}RIKEN Center for AI Project, Tokyo, Japan \\ \textsuperscript{$\dagger$}Instituto de Telecomunicações, Instituto Superior Técnico, Universidade de Lisboa, Portugal\\ \ \\
  \texttt{\textsuperscript{1}ssklaviadis@gmail.com, \textsuperscript{2}thomas.moellenhoff@riken.jp, \textsuperscript{3}andre.t.martins@tecnico.ulisboa.pt,  \textsuperscript{4}mario.figueiredo@tecnico.ulisboa.pt, \textsuperscript{5}emtiyaz.khan@riken.jp}
}

\definecolor{tablecolor}{rgb}{0.8,0.8,0.8}
\newcommand{\highlight}[1]{{\color{red} #1}}

\newcommand\cut[1]{}

\newcommand{\tSigma}{\widetilde{\vSigma}}

\newcommand{\squishlist}{
   \begin{list}{$\bullet$}
    { \setlength{\itemsep}{0pt}      \setlength{\parsep}{3pt}
      \setlength{\topsep}{3pt}       \setlength{\partopsep}{0pt}
      \setlength{\leftmargin}{1.5em} \setlength{\labelwidth}{1em}
      \setlength{\labelsep}{0.5em} } }

\newcommand{\squishlisttwo}{
   \begin{list}{$\bullet$}
    { \setlength{\itemsep}{0pt}    \setlength{\parsep}{0pt}
      \setlength{\topsep}{0pt}     \setlength{\partopsep}{0pt}
      \setlength{\leftmargin}{2em} \setlength{\labelwidth}{1.5em}
      \setlength{\labelsep}{0.5em} } }

\newcommand{\squishend}{
    \end{list}  }

{}
{}
{}

\newcommand{\half}{\mbox{$\frac{1}{2}$}}
\newcommand{\real}{\mbox{$\mathbb{R}$}}

\newcommand{\sqr}[1]{\left[#1\right]}

\newcommand{\myexpect}{\mathbb{E}}

\newcommand{\gauss}{\mbox{${\cal N}$}}

\newcommand{\myvec}[1]{\mbox{$\mathbf{#1}$}}
\newcommand{\myvecsym}[1]{\mbox{$\boldsymbol{#1}$}}

\newcommand{\vzero}{\mbox{$\myvecsym{0}$}}

\newcommand{\vepsilon}{\mbox{$\myvecsym{\epsilon}$}}

\newcommand{\vmu}{\mbox{$\myvecsym{\mu}$}}

\newcommand{\vSigma}{\mbox{$\myvecsym{\Sigma}$}}

\newcommand{\vg}{\mbox{$\myvec{g}$}}

\newcommand{\vu}{\mbox{$\myvec{u}$}}

\newcommand{\vw}{\mbox{$\myvec{w}$}}

\newcommand{\vx}{\mbox{$\myvec{x}$}}

\newcommand{\vz}{\mbox{$\myvec{z}$}}
\newcommand{\vA}{\mbox{$\myvec{A}$}}

\newcommand{\vE}{\mbox{$\myvec{E}$}}

\newcommand{\vH}{\mbox{$\myvec{H}$}}
\newcommand{\vI}{\mbox{$\myvec{I}$}}

\newcommand{\Var}{\mbox{$\operatorname{Var}$}}

\newcommand{\Cov}{\mbox{$\operatorname{Cov}$}}

\begin{document}

\maketitle

\begin{abstract}
Stein's identity is a fundamental tool in machine learning with applications in generative models, stochastic optimization, and other problems involving gradients of expectations under Gaussian distributions. Less attention has been paid to problems with non-Gaussian expectations.
Here, we consider the class of bounded-support $q$-Gaussians and derive 
a new Stein identity leading to gradient estimators which have nearly identical forms to the Gaussian ones, and which are similarly easy to implement.
We do this by extending the previous results of Landsman, Vanduffel, and Yao (2013) to prove new Bonnet- and Price-type theorems for \smash{$q$-Gaussians}. We also simplify their forms by using \emph{escort} distributions.
Our experiments show that bounded-support distributions can reduce the variance of gradient estimators, which can potentially be useful for Bayesian deep learning and sharpness-aware minimization.
Overall, our work simplifies the application of Stein's identity for an important class of non-Gaussian distributions. 
\end{abstract}

\section{Introduction}\label{sec:intro}

Stein's identity has been popular in machine learning for estimating gradients of $\myexpect_p[f(\vx)]$, where the expectation of a real-valued differentiable function $f:\mathbb{R}^D\rightarrow \mathbb{R}$ is taken with respect to a Gaussian density $p(\vx) = \gauss(\vx; \vmu,\vSigma)$ with mean $\vmu$ and covariance $\vSigma$. Applications arise in a wide-variety of machine-learning problems including stochastic optimization \citep{mohamed2020monte}, deep generative models \citep{rezende2014stochastic}, and variational inference \citep{blundell2015weight, khan2018fast}.
Stein's identity states that the following equality holds:
\begin{equation}\label{eq:gauss-stein}
\myexpect_p \left[ (\vx-\vmu)f(\vx) \right] = \Cov_p(\vx) \, \myexpect_p \left[ \nabla_{\text{\vx}} f(\vx) \right].
\end{equation}
First proved by Charles Stein \cite{Stein1973,stein1981estimation}, the identity results from integration by parts and holds under mild conditions on $f$ (see \citep[App. A.2]{lin2019stein}).

The Stein identity is used to express gradients with respect to $\vmu$ and $\vSigma$ in terms of the gradient and Hessian of $f(\vx)$, respectively (derivations are in \Cref{app:price-gauss}): 
\begin{equation}
    \label{eq:bonnet-price}
    \nabla_{\text{\vmu}} \, \myexpect_p \left[ f(\vx) \right] = \myexpect_p \left[ \nabla_{\text{\vx}} f(\vx) \right],
    \qquad\qquad
    \nabla_{\text{\vSigma}} \, \myexpect_p \left[ f(\vx) \right] = \half \myexpect_p \left[ \nabla^2_{\text{\vx}} f(\vx) \right].
\end{equation}
These expressions, known as Bonnet's \cite{bonnet1964transformations} and Price's \cite{price2003useful} theorems have received considerable attention in deep learning due to their convenient forms. Stochastic gradients with respect to $\vmu$ and $\vSigma$ are obtained by first sampling $\vx\sim p(\vx)$ and then computing $\nabla_{\text{\vx}} f(\vx)$ and $\nabla_{\text{\vx}}^2 f(\vx)$ at the sample. This line of thought has led to several variants of the ``reparameterization trick'' and other path-wise gradient estimators, which often have lower variance than score-function estimators \citep{williams1992simple}. Overall, Stein's identity has had a profound impact on stochastic-gradient estimation techniques \cite{liu2016stein}. 

Extensions of Stein's identity to non-Gaussian distributions have received much less attention than the Gaussian case. Generalizations do exist, for instance to elliptical families, which contain many interesting location-scale families, including the Gaussian distribution. In particular, \citet{landsman2008stein} and  \citet{landsman2013note,landsman2015some} have studied such generalizations focusing on the Pearson VII class, which contains various heavy-tailed distributions. The usefulness of such generalizations for gradient estimation has, to the best of our knowledge, not been explored. Can gradient estimators derived for non-Gaussian families also take simple forms and are they as easy to implement? The goal of this paper is answer these questions. 

In this paper, we derive a new Stein identity for the Pearson II class of elliptical families called bounded-support $q$-Gaussian distributions (\Cref{fig:loc-scale}). Such bounded-support distributions are interesting for gradient estimation because, unlike Gaussians, samples drawn from them always land inside a bounded interval, which also naturally implies a bound on the gradient variance.
We derive Bonnet- and Price-type theorems for this class of $q$-Gaussian distributions and show that the gradient estimators have nearly-identical form to those of Gaussian stochastic gradients, and are similarly easy to implement. This means we can compute unbiased gradient estimates by evaluating gradients and Hessians of $f(\vx)$ at samples $\vx \sim p(\vx)$. These gradient expressions are obtained by using a distribution associated with the base distribution known in information geometry and statistical physics \cite{naudts2009q, amari2011geometry, matsuzoe2011geometry} as the \emph{escort} distribution (\Cref{fig:loc-scale}). We present numerical experiments confirming the bounded variance property, and compare them to Bayesian deep-learning methods and to sharpness-aware minimization \cite{foret2020sharpness}. Overall, our work simplifies the application of Stein's identity to gradient estimation involving an important class of bounded-support $q$-Gaussian distributions.

\begin{figure}[!t]
    \centering
    \includegraphics[width=\linewidth]{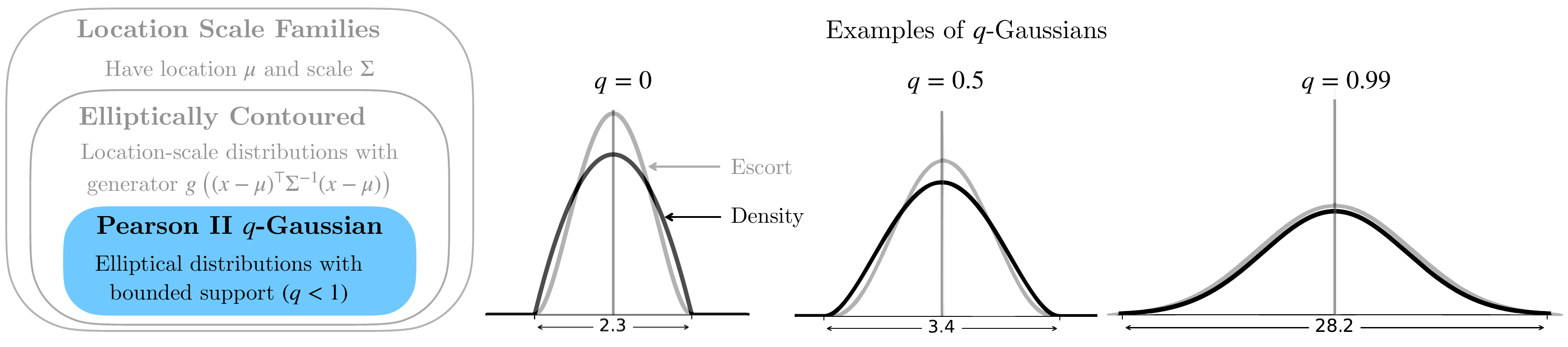}
    \caption{Bounded support $q$-Gaussians are a subclass of location-scale families that are elliptically contoured, that is, they are obtained by composing a quadratic \smash{$s(\vx) = (\vx-\vmu)^\top \vSigma^{-1}(\vx-\vmu)$} with a generator function $g$, thus $p(\vx) = g\bigl(s(\vx)\bigr)$. On the right, we show three examples of $q$-Gaussians for $q=0, 0.5,$ and $0.99$. We see for larger $q$ the base densities (black curves) are less peaked and have larger support. We also show the first associated \emph{escort} $2-q$-Gaussian densities (gray curves), which are slightly more peaked than their base densities. As $q\to 1$, $q$-Gaussians converge to Gaussians.}
    \label{fig:loc-scale}
\end{figure}

\section{Bounded-Support $q$-Gaussian Distributions}\label{sec:background}

We present a new Stein identity applied to gradient estimation involving bounded-support $q$-Gaussian distributions. Such distributions are a special type of location-scale distributions, obtained by applying a specific \emph{generator} function $g$ to a Gaussian-like quadratic form. We denote the location by $\vmu$ and the elliptical scale matrix by $\vSigma$. A $D$-variate elliptical density has the form
\begin{equation} \label{eq:elliptical_dist}
    p(\vx) = |\vSigma|^{-1/2} g\bigl( s(\vx) \bigr), \quad \text{ where } s(\vx) = (\vx - \vmu)^\top \vSigma^{-1}(\vx-\vmu),
\end{equation}
and $g:\mathbb{R}\rightarrow \mathbb{R}_+$ is a non-negative, density-generator function, appropriately scaled so that $p(\vx)$ is normalized, that is, it integrates to 1. For example, Gaussians are obtained by setting $g(s) \propto \exp(-s/2)$. The normalization condition can be written as 
\[
\int_0^\infty s^{D/2-1} g(s) ds = \frac{\Gamma(D/2)}{\pi^{D/2}},
\]
where $\Gamma$ is the Gamma function. If the second moments exists, the covariance matrix $ \Cov(\vx)$ is proportional to $\vSigma$. In the Gaussian case, $ \Cov(\vx) = \vSigma$, but otherwise $\vSigma$ should be thought of as a scale or dispersion parameter, which is proportional to the covariance.


Bounded-support $q$-Gaussians belong to a class of distribution known as  Pearson Type II \cite{Khalafi}, defined on the radius-$R$ ellipsoid $\{ \vx \in \real^D : (\vx - \vmu)^\top \vSigma^{-1}(\vx-\vmu)<R^2 \}$, where $R>0$. Given $R$, for all $s$ in the interval $0 \leq s < R^2$, the generator function takes the following form,
\begin{equation}\label{eq:genPearsonII}
g(s) = \frac{Z}{R^{ D+2m}} \left(R^2-s \right)^{ m}, \quad \text{ where } Z = \frac{1}{ \pi^{D/2} } \frac{\Gamma(\frac{D}{2}+m+1)}{\Gamma(m+1)}, 
\end{equation}
and $m$ is the shape parameter of the generator. We define $m = 1/(1-q)$ in terms of another scalar $q<1$, which gives rise to $q$-Gaussian designation that is well-known in information geometry and statistical physics \citep{naudts2009q, matsuzoe2011geometry, amari2011geometry}. The max support radius $R$ depends only on $q$ and the dimensionality $D$, as shown by the following lemma that gives an explicit form of $p(\vx)$ (the proof is in \Cref{app:elliptical}).

\begin{lemma}\label{lem:q-gaussian-pearsonII}
The density of the generalized Pearson Type~II subfamily with $q<1$ can be written as a $q$-Gaussian: 
\begin{equation}
    p(\vx) = \gauss_q(\vx|\vmu,\vSigma) = \exp_{q} \left[ \half |\vSigma|^{-\frac{1}{2m}}\left(  R^2-(\vx - \vmu)^\top \vSigma^{-1}(\vx-\vmu)\right)  - m \right],
\end{equation}
where $\exp_{q}(t) := \left[1+t/m \right]_+^m$ denotes the $q$-deformed exponential function \citep{NAUDTS2002323}, with $m = 1/(1-q)$ and $[ u ]_+ = \max( 0, u)$. The support radius $R$ is a function of $q$ and $D$ and is given by  
\begin{equation}\label{eq:R2q}
R^2 =\left[ \left(2m\right)^{m}  Z \right]^{2/(2m+D)} \; . 
\end{equation}
\end{lemma}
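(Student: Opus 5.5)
The plan is a direct computation. Unfold the $q$-exponential on the right-hand side, match it term by term with the Pearson Type~II density \eqref{eq:elliptical_dist}--\eqref{eq:genPearsonII}, and read off the condition on $R$ that makes the two agree. Before that, I will confirm that $Z$ in \eqref{eq:genPearsonII} is the correct normalizing constant for every $R>0$. This shows that $R$ is a free parameter of the Pearson II form, which the $q$-Gaussian parametrization then pins down.

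\textbf{Normalization.} Substituting $g$ into the condition $\int_0^\infty s^{D/2-1}g(s)\,ds = \Gamma(D/2)/\pi^{D/2}$, I change variables $s = R^2 t$. This gives
\[
\frac{Z}{R^{D+2m}}\, R^{D+2m}\int_0^1 t^{D/2-1}(1-t)^m\,dt = Z\,\frac{\Gamma(D/2)\Gamma(m+1)}{\Gamma(D/2+m+1)},
\]
which equals $\Gamma(D/2)/\pi^{D/2}$ by the definition of $Z$. So the density is normalized for any $R$. The Beta integral converges because $q<1$ implies $m=1/(1-q)>0$.

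\textbf{Unfolding the $q$-exponential.} Write $s=s(\vx)$ and use $\exp_q(t)=[1+t/m]_+^m$. The argument $t = \half|\vSigma|^{-1/(2m)}(R^2-s) - m$ gives $1+t/m = \frac{1}{2m}|\vSigma|^{-1/(2m)}(R^2-s)$. Since $m>0$, the positive part and the power factor cleanly:
\[
\exp_q(t) = (2m)^{-m}\,|\vSigma|^{-1/2}\,[R^2-s]_+^m .
\]
This vanishes exactly outside the ellipsoid $s\ge R^2$, so the support matches the Pearson II support. Inside it, the Pearson II density is $|\vSigma|^{-1/2} Z R^{-(D+2m)}(R^2-s)^m$. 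The two expressions coincide for all $\vx$ if and only if $(2m)^{-m} = Z R^{-(D+2m)}$, that is, $R^{D+2m} = (2m)^m Z$. Raising both sides to the power $2/(D+2m)$ gives \eqref{eq:R2q}.

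\textbf{Main obstacle.} There is no genuine obstacle; the only points needing care are bookkeeping. First, the exponent $-1/(2m)$ on $|\vSigma|$ is chosen exactly so that the $m$-th power yields $|\vSigma|^{-1/2}$. Second, the $[\cdot]_+$ truncation must be checked to reproduce the bounded support rather than, say, an odd power of a negative number. Third, the conclusion $R=R(q,D)$ follows because $Z$ depends only on $m$ and $D$. I would also remark that fixing $R$ this way is what lets the density be written as a single $\exp_q$ with the additive constant $-m$.
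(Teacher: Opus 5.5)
Your proposal is correct and is essentially the paper's argument. Both come down to unfolding $\exp_q$ into $(2m)^{-m}|\vSigma|^{-1/2}[R^2-s(\vx)]_+^m$ and matching it with the Pearson~II generator through $Z R^{-(D+2m)} = (2m)^{-m}$; the paper posits $R$ and checks that the generator becomes $\left(\frac{1-q}{2}(R^2-s)\right)_+^{1/(1-q)}$, whereas you derive $R$ from the matching condition and also verify, as the paper does not, that $Z$ normalizes the density for every $R$.
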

These distributions have recently been studied by \citet{martins2022sparse} as sparse continuous distributions induced by Tsallis entropies \cite{tsallis1988possible}. By using the definition of $\exp_q$, the same density can be expressed compactly as
\begin{equation}\label{eq:q-PearsonII-form1}
    p(\vx) \propto |\vSigma|^{-1/2} \left(R^2-s(\vx)\right)^{ m}_+,
\end{equation}
which will be useful later to highlight differences relative to the required \textit{escort} distribution.

\citet{landsman2008stein,landsman2013notel,landsman2015some} prove an extension of Stein’s identity to the class of absolutely continuous elliptical distributions, with a special focus on heavy-tailed distributions belonging to Pearson~Type VII family. Their results are based on canonical work on the properties of elliptical families \cite{cambanis1981theory, fang1990symmetric, arellano2001some, arellano2006bayesian}. Specifically, \citet[Prop.~2]{landsman2015some} give a general proof of elliptical Stein, but do not treat the bounded-support case on which we focus in this paper; see also \cite[Lemma~2]{landsman2013note} and \cite{landsman2008stein}. Notably, stochastic gradient estimators based on generalized elliptical Stein identities have not been studied and we are not aware of any works showing that such estimators can be brought into a form similar to those of Gaussian stochastic gradients. Our goal in this paper is to fill this gap.

\section{A Stein-type identity for bounded-support $q$-Gaussians}\label{sec:stein-q}

In this section we derive a Stein-type identity tailored to bounded-support $q$-Gaussians ($q<1$). We follow the approach of \citet{landsman2008stein, landsman2013note, landsman2015some}, using the \textit{associated} density $p^*$ to derive a Stein-type identity. We show in \cref{app:elliptical} that the first associated law coincides with the  $(2-q)$-\textit{escort} density $p^\star(\vx) \propto p(\vx)^{2-q}$ \cite{Naudts2004}. As far as we know this connection between the \textit{associated laws} defined in the classical statistical literature on elliptical families \citep{fang1990symmetric, johnson1987multivariate}, and the \textit{escort} distributions studied in statistical physics and information geometry \citep{matsuzoe2011geometry, Naudts2004} has not been noticed before. 
The use of escort distributions is instrumental in extending Stein's identity in an elegant way, mirroring the Gaussian Stein expression, and highlights the probabilistic structure that would otherwise be buried in repeated integration by parts. 

The associated density $p^\star$ is defined through the generator function that is obtained by integrating the original generator $g$ over the interval $(s, R^2)$,
\begin{equation}
    G(s) =\int_{s}^{R^2} g(t) dt. 
    \label{eq:gen-q}
\end{equation}
As we review in \Cref{app:elliptical}, using this generator yields another Pearson II distribution with exponent increased by 1,
\begin{equation}\label{eq:q-PearsonII}
p^\star(\vx) \propto |\vSigma|^{-1/2} \left(R^2-s(\vx) \right)^{m+1}_+. 
\end{equation}
Comparing this expression to \cref{eq:q-PearsonII-form1} shows that the only difference is that $m$ in the exponent is replaced by $m+1$. Note that $p$ and $p^\star$ share the same location-scale parameters $(\vmu,\vSigma)$ and bounded max support radius $R$, but $p^\star$ has a sharper peak at $\vmu$ because of the larger exponent. This difference is visible in \Cref{fig:loc-scale}.


In \cref{app:stein-proof} we prove the following theorem which establishes a new Stein-type identity for bounded-support $q$-Gaussian distributions.
\begin{theorem}[Bounded-support $q$-Gaussian Stein identity]\label{thm:stein-bounded}
For any almost everywhere differentiable \smash{$f:\real^D \to \real$}, with $\myexpect_{p^\star} \left\| \nabla f(\vx) \right\| < \infty$, when the second moment $\Cov_{p}(\vx)$ exists:
\begin{equation}\label{eq:stein-pearsonII-expect}
\myexpect_{p} \left[ (\vx-\vmu) f(\vx) \right] 
= \Cov_{p}(\vx) \; \myexpect_{\highlight{ p^\star }} \! \left[ \nabla_{\text{\vx}} f(\vx) \right].
\end{equation}

\end{theorem}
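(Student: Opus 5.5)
The plan is to derive the identity from a single integration by parts, exploiting the fact that the density generator vanishes on the boundary of the ellipsoid. The key observation is that the associated generator $G$ in \cref{eq:gen-q} satisfies $G'(s) = -g(s)$ and $G(R^2)=0$. By the chain rule, $\nabla_{\vx} s(\vx) = 2\vSigma^{-1}(\vx-\vmu)$, so
\begin{equation*}
\nabla_{\vx} G\bigl(s(\vx)\bigr) = -2\, g\bigl(s(\vx)\bigr)\, \vSigma^{-1}(\vx-\vmu).
\end{equation*}
Multiplying by $-\half\vSigma\,|\vSigma|^{-1/2}$ gives
\begin{equation*}
(\vx-\vmu)\,p(\vx) = -\half \vSigma\, |\vSigma|^{-1/2}\,\nabla_{\vx} G\bigl(s(\vx)\bigr).
\end{equation*}
This expresses the weighted base density as a gradient of the unnormalized associated density.

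Next I integrate against $f$ over the open ellipsoid $E=\{s(\vx)<R^2\}$ and apply the divergence theorem componentwise:
\begin{equation*}
\myexpect_p[(\vx-\vmu)f(\vx)] = -\half\vSigma|\vSigma|^{-1/2}\left[\oint_{\partial E} f\,G(s)\,\vec n\, dS - \int_E G\bigl(s(\vx)\bigr)\nabla f(\vx)\,d\vx\right].
\end{equation*}
The boundary term vanishes because $G(s)\propto (R^2-s)^{m+1}\to 0$ on $\partial E$, and $m+1>0$ since $q<1$. Writing $c^\star=\int_E |\vSigma|^{-1/2}G(s(\vx))\,d\vx$, so that $p^\star = |\vSigma|^{-1/2}G(s)/c^\star$, this gives
\begin{equation*}
\myexpect_p[(\vx-\vmu)f] = \half c^\star\,\vSigma\,\myexpect_{p^\star}[\nabla f].
\end{equation*}

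To identify the constant, I apply the same identity with $f(\vx)=x_j$, whose gradient is the unit vector $e_j$. This yields
\begin{equation*}
\Cov_p(\vx)=\myexpect_p[(\vx-\vmu)(\vx-\vmu)^\top] = \half c^\star\vSigma.
\end{equation*}
Substituting back gives \cref{eq:stein-pearsonII-expect}. This route sidesteps computing $c^\star$ explicitly from the Gamma-function normalizations, although one could cross-check it against \cref{eq:genPearsonII}. The hypothesis that $\Cov_p$ exists holds automatically here because the support is bounded.

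The main obstacle is justifying the integration by parts when $f$ is only almost everywhere differentiable. My first attempt would be to pass to the unit ball via $\vx=\vmu+\vSigma^{1/2}\vz$. On each line parallel to a coordinate axis, I would apply the one-dimensional fundamental theorem of calculus between the two points where the line meets the sphere, assuming $f$ is absolutely continuous on such lines. The boundary values vanish there because $G=0$. I would then use Fubini's theorem, with the condition $\myexpect_{p^\star}\|\nabla f\|<\infty$ ensuring that the right-hand side is integrable. Near the boundary, $f$ may be unbounded. In that case I would truncate to $\{s<R^2-\epsilon\}$, where the boundary contribution is $O(\epsilon^{m+1})$ times the boundary average of $f$, and let $\epsilon\to0$ by dominated convergence.
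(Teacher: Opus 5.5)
Your proof is correct. Its analytic core is the same as the paper's: once made rigorous on coordinate lines in whitened coordinates with Fubini, your identity $(\vx-\vmu)p(\vx)=-\tfrac12|\vSigma|^{-1/2}\vSigma\nabla G(s(\vx))$ is the paper's slice computation $\partial_{z_i}(R^2-\|\vz\|^2)^{m+1}=-2(m+1)z_i(R^2-\|\vz\|^2)^m$, written through $G'=-g$ instead of the explicit power.

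The genuine difference is how you identify the constant. The paper arrives at the prefactor $\myexpect_p[R^2-\|\vz\|^2]/(2(m+1))$. It then imports the Beta moments of \cref{lem:radial-moments-associated} to show this equals $\myexpect_p[r^2]/D$, and uses item~3 of that lemma to recognise $\Cov_p(\vx)$. You instead feed the linear test functions $f(\vx)=x_j$ back into the identity just proved and read off $\Cov_p(\vx)=\tfrac12c^\star\vSigma$. You should state that $\myexpect_p[\vx]=\vmu$, which follows from taking $f\equiv1$.

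Your route needs no Gamma or Beta integrals and works verbatim for any elliptical law whose cumulative generator vanishes on the boundary. It even reverses the paper's logic. Since $c^\star=\myexpect_p[R^2-s(\vx)]/(m+1)$, taking the trace of $\vSigma^{-1}\Cov_p(\vx)$ gives $\myexpect_p[r^2]/D=\myexpect_p[R^2-r^2]/(2(m+1))$, and hence $\myexpect_p[r^2]=DR^2/(D+2(m+1))$. So the moment facts the paper takes as input become consequences of your argument. The paper's route, in exchange, keeps every constant explicit in $(R,D,m)$ from the start, at the cost of depending on the Beta-law lemma.

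Your added assumption that $f$ be absolutely continuous on lines is necessary, not a weakness of your approach. Under mere a.e.\ differentiability the identity fails: the indicator of $\{x_1>\mu_1\}$ has $\nabla f=0$ a.e., yet $\myexpect_p[(x_1-\mu_1)f(\vx)]>0$. The paper's slice-wise integration by parts relies on the same assumption tacitly.
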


The form of \cref{eq:stein-pearsonII-expect} is nearly identical to that of \cref{eq:gauss-stein}, with the difference that the expectation on the right-hand side uses the escort $p^\star$ instead of the original $p$ (as highlighted in red).  The proof in \cref{app:stein-proof} proceeds by defining \smash{$\vz = \vSigma^{-1/2}(\vx-\vmu)$}, applying iterative one-dimensional integration by parts, and exploiting the fact that the Pearson~II density and its associated law vanish at the boundary, as well as the  facts described in \cref{lem:radial-moments-associated} below.
\begin{lemma} \label{lem:radial-moments-associated} 
The following facts hold for any bounded-support $q$-Gaussian with $q<1$,
\begin{enumerate}
    \item Defining $r(\vx) = \sqrt{s(\vx)}$, the following holds under $p$ and $p^\star$ respectively,
\[
    \frac{r(\vx)^2}{R^2} \sim \mathrm{Beta} \left( \tfrac D2, m + 1\right),
    \qquad \text{ and  } \qquad
    \frac{r(\vx)^2}{R^2} \sim \mathrm{Beta} \left( \tfrac D2, m + 2\right).
\]
    \item The expectations of $s(\vx) = r(\vx)^2$ have a closed-form expression in terms of $R$, $D$, and $m$,
\[
    \myexpect_{p} \left[ s(\vx) \right] = \frac{D\, R^2}{ D+2(m+1) } ,
    \qquad \text{ and  } \qquad
    \myexpect_{p^\star} \left[ s(\vx) \right] = \frac{D\, R^2}{ D+2 \left(m+2 \right) } . 
\]
    \item The second moment can be written in terms of $\vSigma$,
\[
    \Cov_{p}(\vx) = \frac{1}{D} \myexpect_{p} [s(\vx)] \vSigma,
    \qquad \text{ and  } \qquad
    \Cov_{p^\star}(\vx) = \frac{1}{D} \myexpect_{p^\star} [s(\vx)] \vSigma.
\] 
    \item Finally, we have the following reweighted representation of $p^\star$,
\[
p^\star(\vx) = \frac{(R^2 - s(\vx))\, p(\vx)}{\myexpect_{p} \left[ R^2-s(\vx) \right]}  .
\]
\end{enumerate}
\end{lemma}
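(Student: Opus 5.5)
The plan is to reduce everything to the standardized variable $\vz = \vSigma^{-1/2}(\vx-\vmu)$, so that $s(\vx) = \|\vz\|^2$, and then pass to polar coordinates $\vz = r\,\vu$ with $\vu$ on the unit sphere. Both $p$ and $p^\star$ are elliptical with generators proportional to $(R^2-s)^{m}_+$ and $(R^2-s)^{m+1}_+$ respectively, by \cref{eq:q-PearsonII-form1,eq:q-PearsonII}. Hence the density of $r$ under $p$ is proportional to $r^{D-1}(R^2-r^2)^m$ on $(0,R)$, the factor $r^{D-1}$ coming from the surface area of the sphere.

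\textbf{Part 1.} Substitute $t = r^2/R^2$, so $dr \propto t^{-1/2}dt$. The density of $t$ becomes proportional to $t^{D/2-1}(1-t)^{m}$, which is $\mathrm{Beta}(D/2, m+1)$. The same computation with exponent $m+1$ gives $\mathrm{Beta}(D/2, m+2)$ under $p^\star$. No normalizing constants need to be tracked, since a Beta density is determined by its shape up to normalization.

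\textbf{Part 2.} This is the Beta mean $a/(a+b)$ multiplied by $R^2$:
\[
R^2\,\frac{D/2}{D/2+m+1} = \frac{D R^2}{D+2(m+1)},
\]
and analogously with $m+2$ for $p^\star$.

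\textbf{Part 3.} Use spherical symmetry of $\vz$. The direction $\vu$ is uniform on the sphere and independent of $r$, so $\myexpect[\vz]=\vzero$ and
\[
\myexpect[\vz\vz^\top] = \myexpect[r^2]\,\myexpect[\vu\vu^\top] = \tfrac{1}{D}\myexpect[r^2]\,\vI .
\]
Here $\myexpect[\vu\vu^\top]=\vI/D$ follows because the matrix is isotropic and has trace $1$. Transforming back with $\vx-\vmu = \vSigma^{1/2}\vz$ gives $\Cov(\vx) = \tfrac1D \myexpect[s]\,\vSigma$ for both $p$ and $p^\star$. Moments are finite because the support is bounded.

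\textbf{Part 4.} By \cref{eq:q-PearsonII-form1,eq:q-PearsonII}, $p^\star(\vx) \propto (R^2-s(\vx))\,p(\vx)$ pointwise; on the support $R^2-s>0$, and outside it both sides are zero. The normalizing constant is fixed by integrating both sides, which gives $\int (R^2-s)p\,d\vx = \myexpect_p[R^2-s(\vx)]$. This quantity is positive, and by Part 2 it equals $R^2\,(2m+2)/(D+2m+2)$.

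The only delicate step is justifying that $r^2$ is independent of $\vu$ with the $r^{D-1}$ Jacobian, which is standard for spherical laws (cf. \cite{fang1990symmetric}), so I expect no real obstacle.
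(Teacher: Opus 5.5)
Your proposal is correct and follows essentially the same route as the paper. The paper likewise uses the stochastic representation $\vz = r\vu$ with radial density $\propto s^{D/2-1}g(s)$ to get the two Beta laws and their means, isotropy of the uniform direction $\vu$ for the covariance, and the factorization $(R^2-s)^{m+1}_+ = (R^2-s)(R^2-s)^m_+$ for the reweighted form of $p^\star$. Your explicit value $\myexpect_p[R^2-s(\vx)] = 2(m+1)R^2/(D+2m+2)$ is also consistent with the identity $\myexpect_p[r^2]/D = \myexpect_p[R^2-r^2]/(2(m+1))$ that the paper later relies on in the Stein proof.
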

All of these results follow from the application of the definitions of the Beta expectation and the covariance. \cref{app:elliptical} contains additional details. 

There are two other useful variants of the identity,  
\begin{align}
\myexpect_{p} \left[ (\vx-\vmu) f(\vx) \right] 
&= \highlight{ \frac{1}{D} \myexpect_{p} \left[ s(\vx) \right] } \,\, \vSigma \; \myexpect_{\highlight{ p^\star } } \left[ \nabla_{\text{\vx}} f(\vx) \right] \label{eq:stein-pearsonII-expect_1}\\
&= \highlight{ \frac{1}{D} \myexpect_{p} \left[ s(\vx) \right] } \,\, \vSigma\; 
  \frac{\myexpect_{p} \left[ \highlight{ (R^2-s(\vx)) } \nabla_{\text{\vx}} f(\vx) \right]}{ \highlight{ \myexpect_{p} \left[ R^2-s(\vx) \right] } } \label{eq:final-p-only}.
\end{align}
In the first line, we expand the second moment in terms of $\vSigma$, and in the second line, we rewrite the $p^\star$-expectation in terms of the base density $p(\vx)$ by using the last fact of \cref{lem:radial-moments-associated}. The form of \cref{eq:final-p-only} is useful specifically for implementation because it expresses the identity in terms of $p(\vx)$ only. Sampling from $p(\vx)$ is comparably efficient to sampling from a Gaussian, as we show next.

Efficient sampling from $p(\vx)$ is possible by using \cref{lem:radial-moments-associated}. We define \smash{$\vz(\vx) = \vSigma^{-1/2}(\vx-\vmu)$}, which can be rewritten in terms of an independent random variable $\vu$ that is uniformly distributed on a sphere in $\mathbb{R}^D$ as $\vz(\vx) = r(\vx) \vu$, where $\vu \sim \mathrm{Unif}(S^{D-1})$ and $S^{D-1} = \{ \vu \in \mathbb{R}^D : \| \vu\|_2 = 1 \}$. Further, $r(\vx)^2/R^2$ is an independent Beta random variable. Thus sampling proceeds in the following four steps:
\begin{equation} \label{eq:sampling}
    \vu \sim \mathrm{Unif}(S^{D-1}), 
    \qquad
    \frac{r^2}{R^2} \sim \mathrm{Beta} \left( \tfrac D2, m + 1\right),
    \qquad
    \vz \gets r \vu,
    \qquad
    \vx \gets \vmu + \vSigma^{1/2}\vz. 
\end{equation}
The radial parametrization reviewed in \Cref{app:elliptical}.

\section{Bonnet- and Price-type Theorems}\label{sec:bonnet-price-q}

We state Bonnet- and Price-type theorems for bounded-support $q$-Gaussians, for which proofs can be found in \cref{app:q-bonnet-proof}. These theorems are brought into forms that are either identical or structurally analogous to the corresponding Gaussian theorems. The stochastic gradient expressions we derive are easy to estimate using Monte-Carlo sampling since the distributions over which the expectations are defined are easy to sample from by exploiting \cref{eq:sampling}.  

We begin with the Bonnet-type theorem, paralleling the first equality in \cref{eq:bonnet-price}. 
\begin{theorem}[$q$-Bonnet]\label{thm:q-bonnet}
For bounded-support $p(\vx) = \mathcal{N}_q(\vx|\vmu,\vSigma)$, assume $f:\real^D \to \real$ to be $C^1$ on an open set containing $\{\vx:\, s(\vx) \le R^2\}$, and $\myexpect_{p} \left\| \nabla f(\vx)\right\| < \infty$. Then,
\begin{equation}\label{eq:q-bonnet}
\nabla_{\text{\vmu}} \myexpect_{p} \left[ f(\vx) \right] = \myexpect_{p} \left[ \nabla f(\vx) \right].
\end{equation}
\end{theorem}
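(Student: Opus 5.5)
The most direct route is reparameterization. By \cref{eq:sampling}, if $\vx \sim \mathcal{N}_q(\vx|\vmu,\vSigma)$ then $\vx = \vmu + \vSigma^{1/2}\vz$, where $\vz = r\vu$ has a law that depends only on $D$ and $q$ (through $R$ and $m$) and not on $\vmu$. Hence
\[
\myexpect_{p}\left[ f(\vx) \right] = \myexpect_{\vz}\left[ f(\vmu + \vSigma^{1/2}\vz) \right],
\]
and the dependence on $\vmu$ sits entirely inside the integrand. The plan is to differentiate under the integral sign:
\[
\nabla_{\vmu} \myexpect_{\vz}\left[ f(\vmu + \vSigma^{1/2}\vz) \right] = \myexpect_{\vz}\left[ \nabla f(\vmu + \vSigma^{1/2}\vz) \right] = \myexpect_{p}\left[ \nabla f(\vx) \right].
\]
The last equality simply undoes the change of variables.

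The only real step is justifying the interchange of gradient and expectation. I would do this one coordinate at a time, using the mean value theorem on difference quotients. Let $K=\{\vx : s(\vx)\le R^2\}$ be the compact support ellipsoid. Since $f$ is $C^1$ on an open set containing $K$, there is a $\delta>0$ such that the closed $\delta$-neighbourhood $K_\delta$ of $K$ is compact and lies inside that open set. The gradient $\nabla f$ is continuous, hence bounded, on $K_\delta$. For $|h|<\delta$ and every $\vz$ in the support, the segment from $\vmu+\vSigma^{1/2}\vz$ to $\vmu+h\,\mathbf{e}_i+\vSigma^{1/2}\vz$ stays in $K_\delta$. The difference quotients
\[
\tfrac{1}{h}\left[f(\vmu+h\mathbf{e}_i+\vSigma^{1/2}\vz)-f(\vmu+\vSigma^{1/2}\vz)\right]
\]
are therefore uniformly bounded by $\sup_{K_\delta}\|\nabla f\|$. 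They converge pointwise to $\partial_i f$ by differentiability, so dominated convergence (with a constant dominating function on a probability space) gives the interchange. The hypothesis $\myexpect_p\|\nabla f\|<\infty$ then holds automatically; it would only matter if we wanted a version without the compactness margin.

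The main point to watch is that the support of $p$ moves with $\vmu$. If one instead differentiated $\int f(\vx)\,p(\vx;\vmu)\,d\vx$ directly, a Leibniz-rule boundary term would appear. The reparameterization avoids this altogether, because the support of $\vz$ is fixed. Alternatively, the boundary term vanishes because $p$ is zero on $\partial K$ when $m>0$.

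As a cross-check, the same result follows from the score-function route. Write $\nabla_{\vmu}p = -\nabla_{\vx}p$, valid away from the boundary, then integrate by parts. The boundary term vanishes because $(R^2-s)^m_+$ is zero on the ellipsoid. This is the same vanishing-at-the-boundary fact used in the proof of \cref{thm:stein-bounded}, applied here with $p$ itself rather than $p^\star$, which is why no escort appears in \cref{eq:q-bonnet}.
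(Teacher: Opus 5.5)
Your argument is correct, and it takes a genuinely different route from the paper's.

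\textbf{The paper's route.} The paper does not reparameterize. It starts from the score identity $\nabla_{\text{\vmu}}\myexpect_p[f(\vx)] = 2m\,\myexpect_p\bigl[f(\vx)\,\vSigma^{-1}(\vx-\vmu)/(R^2-s(\vx))\bigr]$. It then applies the $p$-only Stein identity \cref{eq:final-p-only} to the test function $t=f/(R^2-s)$, which is singular at the boundary. Finally it uses $\myexpect_p[s(\vx)]/(D\,\myexpect_p[R^2-s(\vx)])=1/(2(m+1))$ from \cref{lem:radial-moments-associated} to solve a fixed-point equation. The solution is $\myexpect_p\bigl[f(\vx)\,\vSigma^{-1}(\vx-\vmu)/(R^2-s(\vx))\bigr]=\frac{1}{2m}\myexpect_p[\nabla f(\vx)]$. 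The escort weight $R^2-s$ cancels the denominator of $t$, which is why no $p^\star$ survives in the final formula.

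\textbf{What your route buys.} Your pathwise proof needs none of this machinery. It uses only that $\vmu$ is a pure translation parameter, so it holds for any compactly supported location family. Your compact-margin mean-value bound also justifies the interchange more transparently than the paper's one-line appeal to dominated convergence. It never touches the moving support or the score, whose factor $(R^2-s)^{-1}$ makes $\nabla_{\text{\vmu}}p$ behave like $(R^2-s)^{m-1}$, which is unbounded near the boundary when $q<0$. You are also right that $\myexpect_p\|\nabla f\|<\infty$ is then automatic.

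\textbf{What the paper's route buys.} It keeps the Bonnet proof uniform with the proof of \cref{thm:q-price}, which follows the same score-plus-Stein template. There $\vSigma$ is not a translation parameter, and reparameterization alone yields terms like $\myexpect[\nabla f(\vx)\,\vz^\top]$. These still need a Stein step to become $\frac{1}{D}\myexpect_p[s(\vx)]\,\myexpect_{p^\star}[\nabla^2 f(\vx)]$. So the Bonnet proof also doubles as a consistency check of \cref{eq:final-p-only}.

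Your integration-by-parts cross-check, using $p$ itself rather than $p^\star$, is the most direct form of the score route. It correctly explains why no escort appears in \cref{eq:q-bonnet}.
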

This identity has exactly the same form as the first equality in \cref{eq:bonnet-price}. 
The proof is in \Cref{app:q-bonnet-proof-1} and follows from the form of $p(\vx)$ given in \cref{eq:q-PearsonII-form1} and the fact that $\nabla_{\text{\vmu}} s(\vx) = -2 \vSigma^{-1}(\vx - \vmu)$ which allows us to write the gradient as
\[
\nabla_{\text{\vmu}} \log p(\vx) 
= \frac{2m}{R^2-s(\vx)} \vSigma^{-1}(\vx - \vmu).
\]
Differentiating under the integral, the left-hand side in \cref{eq:q-bonnet} can be written in terms of $\myexpect_{p} \left[ f(\vx)\vSigma^{-1}(\vx-\vmu)/ (R^2-s(\vx))  \right]$. Finally, the Stein-type identity in \cref{eq:final-p-only} and some algebra lead to the desired result. 

Next, we state the Price-type theorem which parallels the second equality in \cref{eq:bonnet-price}:
\begin{theorem}[$q$-Price]\label{thm:q-price}
For bounded-support $p(\vx) = \mathcal{N}_q(\vx|\vmu,\vSigma)$, assume $f:\real^D \to \real$ is $C^2$ on an open set containing $\{ s(\vx) \le R^2\}$ and that $\myexpect_p \sqr{ \| \nabla f(\vx) \| } + \myexpect_p \sqr{ \| \nabla^2 f(\vx) \|_{F}}~<~\infty$. Then,
\begin{equation}\label{eq:q-price-matrix}
    \nabla_{ \text{\vSigma} } \myexpect[f(\vx)]
    = \highlight{ \frac{1}{D} \; \myexpect_p[s(\vx)]} \;\frac{1}{2}\; \myexpect_{\highlight{ p^\star} }[\nabla_{\text{\vx}}^2 f(\vx)].
\end{equation}
In the Gaussian limit $q \uparrow 1$ (thus $m \to \infty$ and $R \to \infty$), $p^\star \rightarrow p$ and $\myexpect_p[r(\vx)^2]/D\to 1$, which implies that \cref{eq:q-price-matrix} reduces to the classical Price theorem
$\frac{\partial}{\partial \text{\vSigma}_{ij}}\myexpect[f(\vx)]=\tfrac12 \myexpect[\partial_{x_i}\partial_{x_j} f(\vx)]$.
\end{theorem}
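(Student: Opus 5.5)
The plan is to differentiate under the integral sign and then use the Stein identity of \cref{thm:stein-bounded} twice. The first application absorbs the score term in $\vSigma$. The second turns the remaining first-derivative term into a Hessian under the escort $p^\star$.

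\textbf{Step 1 (score in $\vSigma$).} From \cref{eq:q-PearsonII-form1} we have $p(\vx)=c\,|\vSigma|^{-1/2}(R^2-s(\vx))_+^m$ with $c$ independent of $\vSigma$. Using $\nabla_{\text{\vSigma}} s(\vx) = -\vSigma^{-1}(\vx-\vmu)(\vx-\vmu)^\top\vSigma^{-1}$, we get
\[
\nabla_{\text{\vSigma}}\log p(\vx) = -\tfrac12\vSigma^{-1} + \frac{m}{R^2-s(\vx)}\,\vSigma^{-1}(\vx-\vmu)(\vx-\vmu)^\top\vSigma^{-1}.
\]
The support $\{s<R^2\}$ depends on $\vSigma$. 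However, the Leibniz boundary term vanishes because the integrand $f\,p$ is zero on $\{s=R^2\}$ (recall $m>0$). Hence
\[
\nabla_{\text{\vSigma}}\myexpect_p[f] = -\tfrac12\vSigma^{-1}\myexpect_p[f] + m\,\myexpect_p\!\left[\frac{f(\vx)\,\vSigma^{-1}(\vx-\vmu)(\vx-\vmu)^\top\vSigma^{-1}}{R^2-s(\vx)}\right].
\]

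\textbf{Step 2 (first Stein application, one level down).} I will introduce the Pearson~II law $p_-(\vx)\propto(R^2-s(\vx))_+^{m-1}$. It has the same $\vmu,\vSigma,R$ as $p$, and by \cref{eq:q-PearsonII} its associated/escort law is exactly $p$. The reweighting in \cref{lem:radial-moments-associated}(4), read one level down, gives
\[
\myexpect_p\!\left[\frac{h}{R^2-s}\right]=\myexpect_p\!\left[\frac{1}{R^2-s}\right]\myexpect_{p_-}[h],
\]
and the Beta law of \cref{lem:radial-moments-associated}(1) gives $\myexpect_p[1/(R^2-s)] = (D+2m)/(2mR^2)$.

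For each column, I apply \cref{thm:stein-bounded} to the pair $(p_-,p)$ with the scalar function $f(\vx)\,(\vx-\vmu)^\top\vSigma^{-1}\mathbf{e}_j$. By \cref{lem:radial-moments-associated}(2,3) at exponent $m-1$, $\Cov_{p_-}=\frac{R^2}{D+2m}\vSigma$. The constants then collapse: $m\cdot\frac{D+2m}{2mR^2}\cdot\frac{R^2}{D+2m}=\tfrac12$. The gradient of $f(\vx)(\vx-\vmu)^\top\vSigma^{-1}$ produces the term $f\,\vSigma^{-1}$, whose contribution $+\tfrac12\vSigma^{-1}\myexpect_p[f]$ cancels the $-\tfrac12\vSigma^{-1}\myexpect_p[f]$ from Step 1. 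What remains is $\tfrac12\,\vSigma^{-1}\myexpect_p[(\vx-\vmu)\nabla f(\vx)^\top]$, up to a transpose that is harmless by symmetry.

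\textbf{Step 3 (second Stein application).} I apply \cref{thm:stein-bounded} for $(p,p^\star)$, in the form \cref{eq:stein-pearsonII-expect_1}, to each $\partial_j f$. This gives $\myexpect_p[(\vx-\vmu)\partial_j f]=\frac1D\myexpect_p[s]\,\vSigma\,\myexpect_{p^\star}[\nabla\partial_j f]$. The $\vSigma^{-1}$ then cancels, which yields \cref{eq:q-price-matrix}. The Gaussian limit follows from \cref{lem:radial-moments-associated}(2): $\myexpect_p[s]/D = R^2/(D+2m+2)\to1$, using \cref{eq:R2q} to see $R^2\sim 2m$, and $p^\star\to p$.

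\textbf{Main obstacle.} I expect the difficulty to lie in Step 2 when $0<m<1$ (i.e.\ $q<0$). There $p_-$ has a negative exponent and blows up at the boundary, so \cref{thm:stein-bounded} is not literally available. I would first try to bypass $p_-$ and carry out the integration by parts directly in $\vz=\vSigma^{-1/2}(\vx-\vmu)$. This works because $\nabla_{\vz}(R^2-\|\vz\|^2)^m = -2m\,\vz\,(R^2-\|\vz\|^2)^{m-1}$, whose antiderivative $(R^2-\|\vz\|^2)^m$ vanishes on the sphere for all $m>0$. The boundary term therefore still drops out, and integrability of $(R^2-\|\vz\|^2)^{m-1}$ (since $m-1>-1$) together with $C^1$-ness of $f$ up to the boundary justifies the manipulation. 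A secondary point is the symmetric-matrix convention for $\nabla_{\vSigma}$. I will treat $\vSigma$ as unconstrained and note that the final expression is already symmetric.
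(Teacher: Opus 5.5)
Your proof is correct. Its skeleton is the paper's: the score identity in $\vSigma$, a first Stein step to absorb the $1/(R^2-s)$-weighted score term, then a second Stein step under $(p,p^\star)$ to reach the Hessian. The difference is in the middle step.

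The paper stays with the single pair $(p,p^\star)$. It applies the $p$-only identity \eqref{eq:final-p-only} to the singular vector test function $t(\vx)=\frac{f(\vx)}{R^2-s(\vx)}\vSigma^{-1}(\vx-\vmu)$. This regenerates the unknown $B=\myexpect_p\bigl[(\vx-\vmu)(\vx-\vmu)^\top f(\vx)/(R^2-s(\vx))\bigr]$ on the right, and the paper then solves the linear equation $(1-2c)B=c(\cdots)$ with $c=1/(2(m+1))$ to extract the factor $1/(2m)$. You instead go one level down the associated-law ladder and absorb the weight $1/(R^2-s)$ into $p_-$, whose escort is $p$. A Stein step for $(p_-,p)$ with the regular test function $f(\vx)(\vSigma^{-1}(\vx-\vmu))_j$ then gives the constant $\tfrac12$ directly, with no fixed point.

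In $\vz$-coordinates both routes prove the same integration by parts, $2m\int f\,z_k z_l(R^2-\|\vz\|^2)^{m-1}d\vz=\int\partial_k(f z_l)(R^2-\|\vz\|^2)^{m}d\vz$. The paper reaches it by going up to exponent $m+1$ and cancelling; you reach it by going down. The paper's route never leaves the $q$-Gaussian class, but its test function blows up on the boundary. So its hypothesis $\myexpect_{p^\star}\|\nabla t\|<\infty$ and the vanishing boundary term silently rely on the same integrability of $(R^2-\|\vz\|^2)^{m-1}$ that your fallback checks explicitly. Note that $p_-$ already leaves the $q$-Gaussian class at $m=1$, where it is uniform on the ellipsoid, so use the direct integration by parts for all $m\le 1$, not only $m<1$. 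Your treatment of the moving-support Leibniz term and of the Gaussian limit ($R^2\sim 2m$) is also more explicit than the paper's.

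One bookkeeping point: your Step 2 actually produces $\tfrac12\myexpect_p[\nabla f(\vx)(\vx-\vmu)^\top]\vSigma^{-1}$, the transpose of what you wrote. This is harmless, because after Step 3 it equals $\frac{1}{2D}\myexpect_p[s(\vx)]\,\myexpect_{p^\star}[\nabla^2 f(\vx)]$, which is symmetric. But $\vSigma^{-1}$ must sit on the side where the $\vSigma$ from the second Stein step cancels it. As printed, the paper's \eqref{eq:first-order-form} has it on the wrong side.
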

A proof is provided in \Cref{app:q-bonnet-proof-2}. This Price-type theorem is very similar to the Gaussian Price theorem in the second equation of \cref{eq:bonnet-price}, and differences are highlighted in red. The right-hand expectation is taken with respect to the $(2-q)$-escort $p^\star$, and there is an additional factor $\myexpect_p[s(\vx)]/D$. The expression is easy to evaluate because sampling from $p^\star$ is efficient as shown in \cref{eq:sampling}.

\section{Applications}

\subsection{Bounded-variance Monte Carlo estimators}\label{subsec:bounded-variance}

An advantage of bounded-support distributions is that they lead naturally to gradient estimators with bounded variance. We give a formal statement of bounds on the variance of the estimators based on \cref{eq:final-p-only} and \cref{eq:q-price-matrix} respectively. The Stein-type identity in \cref{thm:stein-bounded} and the $q$-Price \cref{thm:q-price} express the gradients of interest in terms of escort expectations as $\myexpect_{p^\star}[\nabla f(\vx)]$ and $\myexpect_{p^\star} [\nabla^2 f(\vx)]$. Since we are interested in approximating these expectations by Monte Carlo, the variance of the resulting estimators directly affects the stochastic gradient noise. Using the reweighted $p$-only expressions from \cref{eq:reweight} in \cref{app:elliptical}, under mild boundedness assumptions on $\nabla f$ and  $\nabla^2 f$, we can bound the variance of the stochastic gradient MC estimators.

\begin{proposition}[Bounded variance MC estimators]\label{prop:bv} Let $\vx_1, ... \vx_S$ be iid samples from a $q$-Gaussian $p(\vx)$. For any almost everywhere differentiable $t:\real^D \to \real$ with $\myexpect_{p^\star} \sqr{ \left\| \nabla t(\vx) \right\|} < \infty$, and $f:\real^D \to \real$ that is $C^2$ on an open set containing $\{s(\vx) \le R^2\}$ such that $\myexpect_p \sqr{ \| \nabla f(\vx) \| } + \myexpect_p \sqr{ \| \nabla^2 f(\vx) \|_{F} }~<~\infty$, define the following Monte-Carlo estimators:
\[
\myexpect_{p^\star} [\nabla t(\vx)] \approx \widehat \vg  = \frac{1}{S} \sum_{k=1}^S \frac{(R^2-s(\vx_k)) \nabla t(\vx_k)}{\myexpect_{p} \left[ R^2-s(\vx) \right]},
\qquad
\myexpect_{p^\star} [\nabla^2 f(\vx)] \approx \widehat \vH = \frac{1}{S} \sum_{k=1}^S \frac{(R^2-s(\vx_k)) \nabla^2 f(\vx_k)}{\myexpect_{p} \left[ R^2-s(\vx) \right]}. 
\]
Note that $M = \myexpect_p \left[ R^2 - s(\vx) \right]$ is available in closed form from \cref{lem:radial-moments-associated}. Assume there exist finite constants $C_1, C_2$ such that 
\[
\underset{ \{s(\text{\vx}) < R^2 \} }{\mathrm{sup}} \| \nabla t(\vx) \| \le C_1,
\qquad \text{ and } \qquad
\underset{ \{s(\text{\vx}) < R^2 \} }{\mathrm{sup}} \| \nabla^2 f(\vx) \|_{\mathrm{op}} \le C_2,
\]
where $\| \cdot \| $ is the spectral (operator) norm on matrices. Then, for each entry $\widehat{g_j}$ and $\widehat{H_{ij}}$ we have,
\[
\Var \left(\widehat g_j \right)
  \le \frac{1}{S} \left( \frac{R^2 C_1}{M} \right)^2,
\qquad \text{ and } \qquad
\Var \left((\widehat H)_{ij}\right)
  \le \frac{1}{S} \left( \frac{R^2 C_2}{M} \right)^2.
\]
Consequently,
\[
\myexpect \left[ \big\| \widehat \vH - \myexpect \left[ \widehat \vH \right] \big\|_F^2 \right]
  \le D^2 \frac{1}{S} \left( \frac{R^2 C_2}{M} \right)^2,
\qquad \text{ and } \qquad
\myexpect \left[ \big\| \widehat \vH-\myexpect \left[ \widehat \vH \right] \big\|_{\mathrm{op}} \right]
 \le \frac{C_3 R^2 C_2}{M}
      \sqrt{\frac{\log D}{S}}
\]
for $M = \myexpect_p \left[ R^2 - s(\vx) \right]$ and finite constant $C_3$.
\end{proposition}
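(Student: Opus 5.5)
The plan is to treat each estimator as an average of $S$ i.i.d.\ bounded random vectors or matrices and to control each entry with the elementary bound $\Var(Y)\le \myexpect[Y^2]\le \sup|Y|^2$. Define the single-sample summands
\[
Y_k = \frac{(R^2-s(\vx_k))\,\nabla t(\vx_k)}{M},\qquad
W_k = \frac{(R^2-s(\vx_k))\,\nabla^2 f(\vx_k)}{M},
\]
so that $\widehat\vg=\frac1S\sum_k Y_k$ and $\widehat\vH=\frac1S\sum_k W_k$. By the reweighting identity in the last item of \cref{lem:radial-moments-associated}, both estimators are unbiased for the escort expectations. Since the $\vx_k$ are i.i.d., $\Var(\widehat g_j)=\frac1S\Var((Y_1)_j)$, and likewise for each entry of $\widehat\vH$.

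The key observation is that $p$ is supported on $\{s(\vx)<R^2\}$, so almost surely $0< R^2-s(\vx_k)\le R^2$. Combining this with $|(\nabla t)_j|\le\|\nabla t\|\le C_1$ gives $|(Y_1)_j|\le R^2C_1/M$ almost surely. For the Hessian, $|(\nabla^2 f)_{ij}|=|e_i^\top\nabla^2 f\,e_j|\le\|\nabla^2 f\|_{\mathrm{op}}\le C_2$, so $|(W_1)_{ij}|\le R^2C_2/M$. Bounding the variance by the second moment yields both entrywise claims. Here $M=R^2-\myexpect_p[s]=R^2(2m+2)/(D+2m+2)>0$ by \cref{lem:radial-moments-associated}, so all constants are finite.

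The Frobenius statement follows by summing the $D^2$ entrywise variances:
\[
\myexpect\|\widehat\vH-\myexpect\widehat\vH\|_F^2=\sum_{i,j}\Var(\widehat H_{ij}).
\]
The operator-norm bound is the main obstacle, and I would get it from a matrix concentration inequality rather than from Frobenius, since the latter would only give a factor $D/\sqrt S$. Write $\widehat\vH-\myexpect\widehat\vH=\frac1S\sum_k X_k$ with $X_k=W_k-\myexpect W_k$. These are i.i.d.\ centered symmetric matrices with $\|X_k\|_{\mathrm{op}}\le 2R^2C_2/M=:L$, which holds because $\|W_k\|_{\mathrm{op}}\le R^2C_2/M$. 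The matrix variance statistic satisfies $\|\sum_k\myexpect X_k^2\|_{\mathrm{op}}\le SL^2$.

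The expectation form of matrix Bernstein (Tropp) then gives
\[
\myexpect\Big\|\sum_k X_k\Big\|_{\mathrm{op}}\le \sqrt{2SL^2\log(2D)}+\tfrac13 L\log(2D).
\]
Dividing by $S$ gives $L\sqrt{2\log(2D)/S}+L\log(2D)/(3S)$. To reach the stated form $C_3\frac{R^2C_2}{M}\sqrt{\log D/S}$, I need to absorb the second term, which is possible when $S\ge\log D$. If $S<\log D$, I instead use the trivial bound $\|\cdot\|_{\mathrm{op}}\le L\le L\sqrt{\log D/S}$. Absorbing $\log(2D)\lesssim\log D$ assumes $D\ge2$; for $D=1$ the operator norm is just a scalar and the bound follows from Cauchy--Schwarz with the variance bound. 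Handling these edge cases determines the constant $C_3$. An alternative route is symmetrization followed by the noncommutative Khintchine inequality.
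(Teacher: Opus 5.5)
Your entrywise and Frobenius steps match the paper's. The paper uses Popoviciu's inequality on $[-B,B]$, which gives the same $B^2$ as your $\Var(Y)\le\myexpect[Y^2]\le\sup Y^2$.

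For the operator norm you take a different route. The paper applies matrix Hoeffding to get the sub-Gaussian tail $2D\exp\bigl(-St^2/(8(2B_H)^2)\bigr)$, then integrates it with a split at $t_0\propto\sqrt{\log(2D)/S}$. This gives a single term $C_3B_H\sqrt{\log(2D)/S}$ with no case analysis. You cite the expectation form of matrix Bernstein instead. That avoids the tail integration but leaves the extra $L\log(2D)/(3S)$ term, hence your split on $S$ versus $\log D$, which you handle correctly.

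What Bernstein can buy is a better variance proxy, though your crude bound $v\le SL^2$ does not use it. Since $\myexpect X_k^2\preceq\myexpect W_k^2\preceq C_2^2\,\myexpect_p[(R^2-s)^2]\,M^{-2}\vI$, and the Beta law of $s/R^2$ gives $\sqrt{\myexpect_p[(R^2-s)^2]}/M\le\sqrt2$, the leading term improves to $O\bigl(C_2\sqrt{\log(2D)/S}\bigr)$. By contrast, the factor $R^2/M=1+D/(2m+2)$ that both proofs carry grows with $D$. The same refinement is available in your entrywise bounds, because you go through the second moment rather than the range.

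One step is wrong: the $D=1$ patch. With $\log D=0$ the claimed right-hand side is $0$, but Cauchy--Schwarz only gives $\myexpect|\widehat H-\myexpect\widehat H|\le B_H/\sqrt S$. For example, with $f(x)=x^2/2$ (so $C_2=1$) you get $\widehat H=\frac1S\sum_k(R^2-s(x_k))/M$, which is non-degenerate because $s/R^2$ is Beta distributed. So the stated inequality fails at $D=1$ for every finite $C_3$.

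What your argument and the paper's actually prove is the bound with $\log(2D)$ in place of $\log D$. The paper's proof itself ends with $\log(2D)$ and leaves the mismatch unaddressed. You should state that version, or restrict the $\log D$ form to $D\ge 2$, rather than claim it at $D=1$.
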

\begin{proof}
The proof is in \cref{app:q-bonnet-proof-3}. It follows from bounded‑range variance bounds (Popoviciu’s inequality) and matrix Hoeffding.   
\end{proof}

\begin{figure}[t!]
\includegraphics[width=0.5\linewidth]{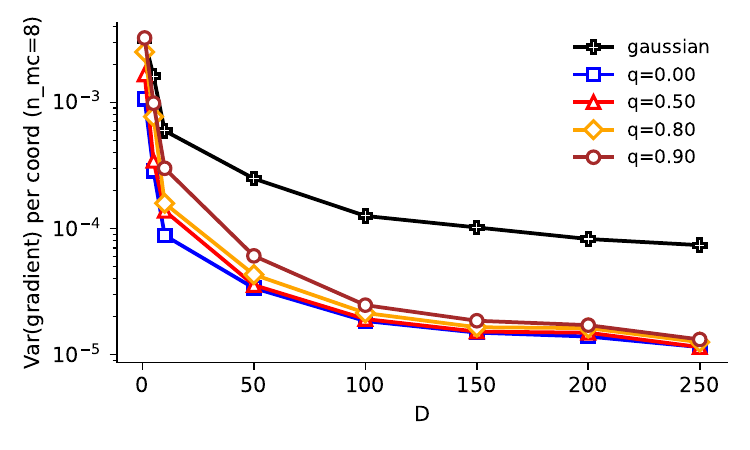}
\includegraphics[width=0.5\linewidth]{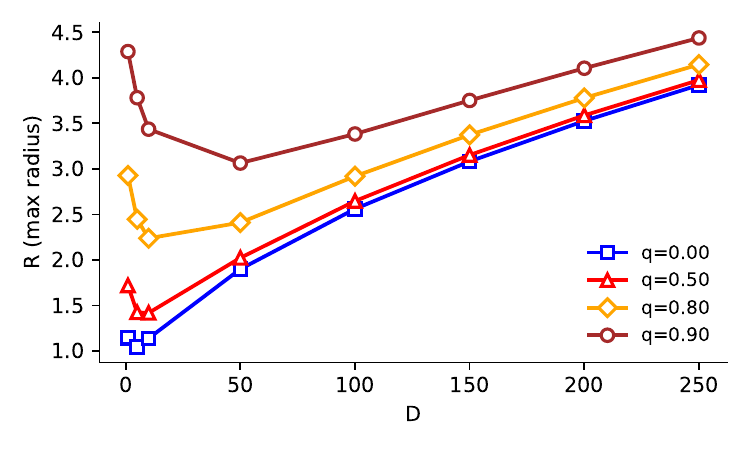}
\caption{Synthetic logistic regression. Left: For $D \in \{10, 50, 200\}$ and $q \in \{0.0, 0.5, 0.8, 1\}$, we draw 8 Monte Carlo samples and compute the empirical per-coordinate gradient variance $\frac{1}{D}\sum_{j=1}^D \Var(\widehat{\nabla} F(\vw^\star)_j)$, averaged over $50$ independent repetitions; standard errors are all <.003. Right: Maximum radius of the $q$-Gaussian support against $D$. }\label{fig:logreg_q_gauss}
\label{fig:logreg}
\end{figure}
\subsection{Numerical experiments}

\paragraph{Synthetic logistic regression experiment.} For dimensions $D \in \{10, 50, 200\}$, we draw a dataset $\{(\vx_i, y_i)\}_{i=1}^N$ with $\vx_i \sim \mathcal{N}(0, \vI_D)$, a ground-truth weight vector $\vw^\star \sim \mathcal{N}(\vzero, \vI_D)$, and labels $y_i \mid \vx_i \sim \mathrm{Bernoulli}(\sigma(\vx_i^\top \vw^\star))$, where $\sigma$ is the sigmoid function. The loss $f(\vw)$ is the binary cross-entropy loss. We consider Monte Carlo pathwise estimators of the gradient of the objective evaluated at $\vw = \vw^\star$,
\[
F(\vw)= \myexpect_{\text{\vepsilon}}\left[f(\vw + \vepsilon)\right].
\]
For the Gaussian baseline we take $\vepsilon \sim \mathcal{N}(0, \vI_D)$. For bounded-support $q$-smoothing, we draw $\vepsilon$ from an isotropic $D$-dimensional bounded-support $q$-Gaussian, so that $\|\vepsilon\|_2 \le R(D,q)$. For each $D\in\{10, 50, 200\}$ and $q \in \{0.0, 0.5, 0.8, 1\}$, we draw 8 Monte Carlo samples $\vepsilon$ and compute
\[
\widehat{\nabla} F(\vw^\star) = \frac{1}{S}\sum^{S}
\nabla_{\text{\vw}} f(\vw),
\]
for $\vw = \vw^\star + \vepsilon$. In \Cref{fig:logreg} (left), for each $q$, over $50$ independent repetitions we plot the empirical per-coordinate gradient variance $\frac{1}{D}\sum_{j=1}^D \Var(\widehat{\nabla} F(\vw^\star)_j)$ against $D$. 
The results in \Cref{fig:logreg} (left) confirm that smaller values of $q$ lead to lower-variance gradient estimators.
We also plot the maximum radius of the support against $D$ in \Cref{fig:logreg} (right) to visualize how the radius increases with the dimension and as $q$ gets closer to one; the dimension noticeably dominates the magnitude of $R$ and the effect of $q$ is significant only with very small $D$.

\begin{figure}
    \centering
    \includegraphics[width=\linewidth]{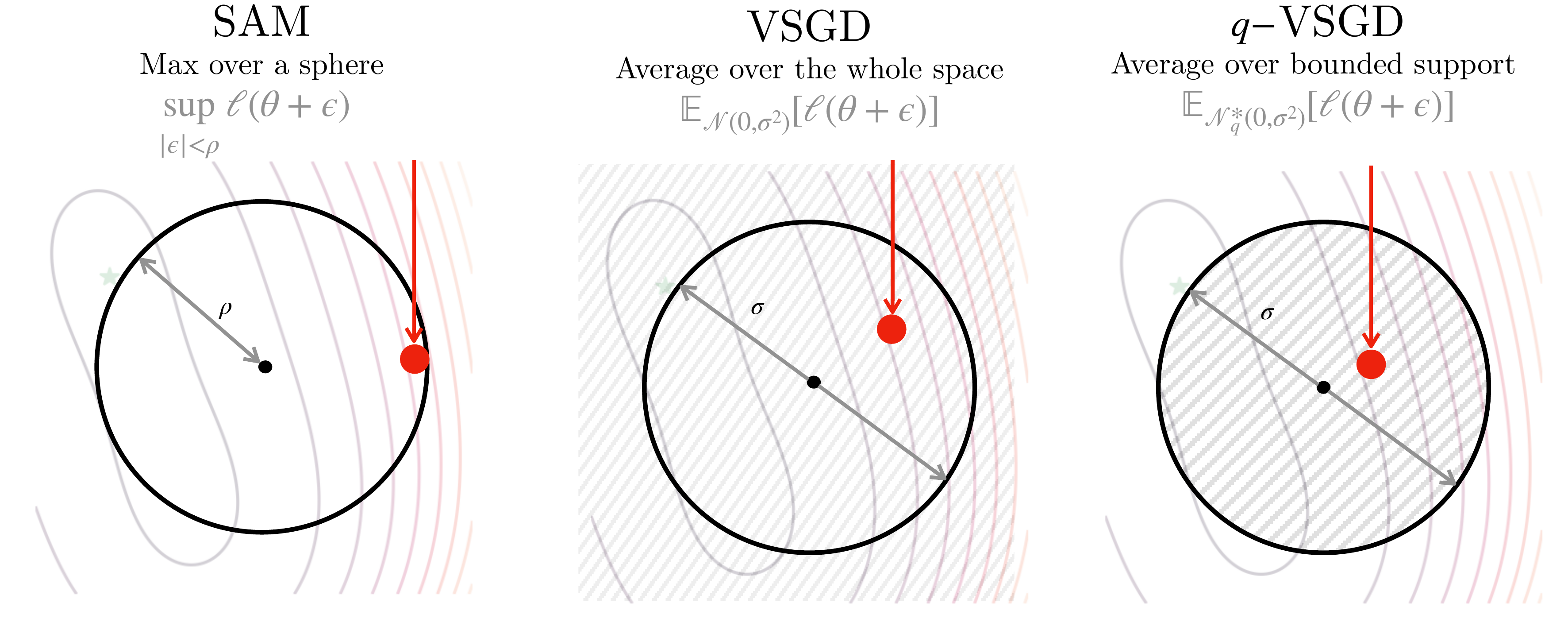}
    \caption{Sharpness-Aware Minimization (SAM)~\citep{foret2020sharpness} considers an adversarial perturbation over a compact ball. Variational stochastic gradient descent (VSGD) with Gaussian weight perturbation averages perturbations over whole space (unbounded support). Our proposed $q$-VSGD uses $q$-Gaussian weight perturbations, which have bounded support similarly to SAM but uses averages similarly to VSGD. The method combines the two complementary features of SAM and VSGD.} 
    \label{fig:placeholder}
\end{figure}

\paragraph{Variational SGD with $q$-Gaussian noise.}
Next, we explore the potential use of our gradient estimators in Bayesian deep learning to estimate a $q$-Gaussian posteriors instead of the usual Gaussian ones. We compare with several variational training methods and also with the Sharpness-Aware Minimization (SAM) algorithm. A visualization is in \Cref{fig:placeholder}, illustrating the main differences which are related to the choice of the point at which gradients are evaluated.

In SAM, we consider a point in a circle around the current parameter that has the worst loss value. Formally, let $f(\vw;\mathcal{B})$ be the minibatch cross-entropy, where $\mathcal{B}$ is the minibatch. Then, we set the perturbation to be $\delta = \rho \vg/\|\vg\|_2$ with $\vg = \nabla_{\text{\vw}} f(\vw;\mathcal{B})$. The parameters are then updated by evaluating the gradient at this point,
\[
\vw \leftarrow \vw - \eta \nabla_{\text{\vw}} f(\vw + \delta; \mathcal{B}).
\]
Overall, this requires two gradient evaluations, using in total two forward and backward pass.
In Variational SGD (VSGD), the goal is to estimate the mean of an isotropic Gaussian posterior. The mean is updated by taking the gradient of the expected loss, requiring at least one MC sample. This means that the update is the same as above but we need to use $\delta \sim \gauss(0, \vI_D)$. 

We propose a new variant of VSGD where we aim to estimate the mean of a $q$-Gaussian posterior, while fixing $\vSigma$ to $\vI_D$. This can be done by making a simple change to VSGD: instead of sampling from a standard normal, we sample from a $q$-Gaussian as shown below:  
\[
\delta \gets \rho \frac{\vepsilon}{R(q,D)}, \text{ where } \vepsilon \sim \gauss_q(\vzero, \vI_D), 
\]
and $R(q,D)$ is the distribution-dependent radius used for normalization. Note that $q$-VSGD with $q=1$ is equivalent to VSGD.
We also compare $q$-VSGD to an optimizer called Improved Variational Online Newton (IVON) which uses a more flexible diagonal-Gaussian posterior \citep{shen2402variational}. 

We compare the methods on a standard ResNet-20 model on the CIFAR-10 dataset, where we find that changing $q$ leads to small improvements over VSGD. The experimental setup follows the one used in the paper that proposed IVON~\cite{shen2402variational}. We train for 200 epochs with batch size 50 using SGD with momentum $0.9$ and weight decay $10^{-4}$, a 5-epoch linear warmup followed by cosine annealing, and sweep over values of $q\leq 1$ as reported in \Cref{tab:cifar10}. In \Cref{tab:cifar10} we report test accuracy, NLL, Brier score, ECE (20 bins), AUROC (as described in~\citep{osawa2019practical}), and wall-clock seconds per epoch (mean $\pm$ standard error over 10 seeds). For $q=0.6$, we observe some improvement in the accuracy. We can further increase the accuracy by increasing the number of MC samples, but the algorithm becomes slower. Overall, the results are not conclusive and indicate that tuning $q$ may not directly yield increased performance right away, despite the fact that the variance of gradient estimator is bounded. 

One potential reason behind the lack of better performance is that for very large dimensions the effect of varying $q$ reduces drastically. This happens because the support is heavily influenced by dimensionality, as shown in \cref{fig:logreg_q_gauss} where for high dimensions all $q$ values have very similar support. Our numerical results suggest using a more flexible $q$-Gaussian form, for example, estimating the scale parameter. We expect a $q$-IVON algorithm will perform better than $q$-VSGD, which does outperform VSGD. Perhaps the most effective modification is to use a low-dimensional $q$-Gaussian factorization. We hope to explore these alternatives in future work.

\begin{table}[t!]
\centering
\resizebox{\textwidth}{!}{
\begin{tabular}{lcccccc}
\toprule
Method & Acc. (\%) $\uparrow$ & NLL $\downarrow$ & ECE (\%) $\downarrow$ & Brier $\downarrow$ & AUROC $\uparrow$ & Time $\downarrow$ \\
\midrule
SGD & 92.0 $\pm$ 0.1 & 0.28 $\pm$ 0.00 & 3.84 $\pm$ 0.11 & 0.12 $\pm$ 0.00 & 92.1 $\pm$ 0.1 & 37s \\
IVON (1-MC) ~\cite{shen2402variational} & 92.5 $\pm$ 0.1 & 0.26 $\pm$ 0.00 & 3.43 $\pm$ 0.09 & 0.12 $\pm$ 0.00 & 92.5 $\pm$ 0.1 & 43s \\
SAM~\cite{foret2020sharpness} & 92.6 $\pm$ 0.1 & 0.22 $\pm$ 0.00 & 1.56 $\pm$ 0.05 & 0.11 $\pm$ 0.00 & 92.5 $\pm$ 0.1 & 70s \\
\midrule
\multicolumn{7}{c}{1 Monte-Carlo Sample}\\
\midrule
$q$-VSGD ($q=0.0$) & 92.1 $\pm$ 0.0 & 0.25 $\pm$ 0.00 & 3.01 $\pm$ 0.05 & 0.12 $\pm$ 0.00 & 92.3 $\pm$ 0.0 & 44s \\
$q$-VSGD ($q=0.2$) & 92.1 $\pm$ 0.0 & 0.26 $\pm$ 0.00 & 2.99 $\pm$ 0.04 & 0.12 $\pm$ 0.00 & 92.3 $\pm$ 0.1 & 43s \\
$q$-VSGD ($q=0.4$) & 92.0 $\pm$ 0.0 & 0.26 $\pm$ 0.00 & 3.10 $\pm$ 0.05 & 0.12 $\pm$ 0.00 & 92.3 $\pm$ 0.1 & 43s \\
$q$-VSGD ($q=0.6$) & 92.2 $\pm$ 0.0 & 0.25 $\pm$ 0.00 & 2.93 $\pm$ 0.04 & 0.12 $\pm$ 0.00 & 92.3 $\pm$ 0.1 & 43s \\
$q$-VSGD ($q=0.8$) & 92.1 $\pm$ 0.0 & 0.26 $\pm$ 0.00 & 3.02 $\pm$ 0.04 & 0.12 $\pm$ 0.00 & 92.2 $\pm$ 0.1 & 43s \\
VSGD & 92.1 $\pm$ 0.0 & 0.25 $\pm$ 0.00 & 2.79 $\pm$ 0.05 & 0.12 $\pm$ 0.00 & 92.4 $\pm$ 0.0 & 42s \\
\midrule
\multicolumn{7}{c}{5 Monte-Carlo Samples}\\
\midrule
$q$-VSGD ($q=0.0$) & 92.6 $\pm$ 0.1 & 0.25 $\pm$ 0.00 & 3.06 $\pm$ 0.08 & 0.11 $\pm$ 0.00 & 92.5 $\pm$ 0.1 & 173s \\
$q$-VSGD ($q=0.2$) & 92.5 $\pm$ 0.0 & 0.25 $\pm$ 0.00 & 3.13 $\pm$ 0.07 & 0.11 $\pm$ 0.00 & 92.6 $\pm$ 0.1 & 173s \\
$q$-VSGD ($q=0.4$) & 92.4 $\pm$ 0.0 & 0.25 $\pm$ 0.00 & 3.17 $\pm$ 0.06 & 0.11 $\pm$ 0.00 & 92.7 $\pm$ 0.1 & 174s \\
$q$-VSGD ($q=0.6$) & 92.6 $\pm$ 0.1 & 0.25 $\pm$ 0.00 & 3.04 $\pm$ 0.07 & 0.11 $\pm$ 0.00 & 92.5 $\pm$ 0.1 & 174s \\
$q$-VSGD ($q=0.8$) & 92.4 $\pm$ 0.0 & 0.25 $\pm$ 0.00 & 3.17 $\pm$ 0.05 & 0.11 $\pm$ 0.00 & 92.7 $\pm$ 0.1 & 182s \\
VSGD & 92.4 $\pm$ 0.1 & 0.25 $\pm$ 0.00 & 3.13 $\pm$ 0.09 & 0.12 $\pm$ 0.00 & 92.6 $\pm$ 0.1 & 173s \\
\bottomrule\\
\end{tabular}
}
\caption{ResNet-20 on CIFAR-10 test performance. Mean $\pm$ standard error over 10 seeds. Among the baselines (the first block at the top), SGD is the worst but fastest. SAM achieves the highest accuracy but is also slow due to two gradients needed in each iteration, while IVON performs reasonably well with a reasonable cost. In the second block, we show $q$-VSGD with 1 Monte-Carlo sample which achieves slightly better accuracy than its counterpart VSGD by increasing $q=0.6$, although it is comparable in other metrics. Going to 5-MC in the third block improves the performance but slows down the algorithm. The results are mixed and indicate more work is needed in improving performance via tweaking $q$.}
\label{tab:cifar10}
\end{table}

\section{Conclusion}
This paper proves a new Stein-type identity for bounded-support $q$-Gaussians, which belong to the subfamily of the Pearson II class of elliptical distributions. We show that the associated (cumulative-generator) law has a simple escort interpretation, allowing for the derivation of Bonnet- and Price-type identities with forms closely resembling those of the Gaussian stochastic gradient estimators. These results are applied to practical pathwise gradient estimators that are easy to implement by sampling from the $q$-Gaussian density. A key consequence of the support's boundedness is that the resulting Monte Carlo estimators have simple bounded variance guarantees. Our experiments illustrate these effects in a controlled synthetic setting, and in deep neural models trained on CIFAR-10, where bounded-support perturbations are competitive with SAM and offer a principled distributional analogue of bounded-radius perturbation methods. 

The results in this paper open new directions for research to exploit generalized Stein identities in broader stochastic gradient applications, such as variational inference and robust optimization, including extensions beyond bounded-support $q$-Gaussians (notably to the heavy-tailed $3>q>1$ regime), learning or adapting $R$, and considering anisotropic $\vSigma$. 

\section*{Acknowledgements}
SS, MEK, and TM were supported by the Bayes duality project, JST CREST Grant Number JPMJCR2112. 
AM was supported by the project DECOLLAGE (ERC-2022-CoG 101088763). 
AM and MF were supported by the Portuguese Recovery and Resilience Plan through project C64500888200000055
(Center for Responsible AI) and by Fundação para a Ciência e Tecnologia through contract
UIDB/50008/2020.

\bibliography{reference}


\appendix

\section{Proof of Bonnet's and Price's theorems with Gaussians}\label{app:price-gauss}

\paragraph{Gaussian Stein, Bonnet, and Price.}
Let $\vx \sim \gauss(\vmu,\vSigma)$ with density $p(\vx)$. The general Stein identity states that for any differentiable test function $t:\real^D\to\real$ with $\myexpect \left\| t(\vx)\right\| + \myexpect \left\| \nabla_{\text{\vx}} t(\vx)\right\| < \infty$,
\begin{equation}\label{eq:gauss-stein-1}
\myexpect \left[ (\vx-\vmu)t(\vx) \right] = \vSigma\myexpect \left[ \nabla_{\text{\vx}} t(\vx) \right].
\end{equation}
This follows from the score identity  $\myexpect \left[ t(\vx) \nabla_{\text{\vx}} \log p(\vx) \right] = -\myexpect \left[ \nabla_{\text{\vx}} t(\vx) \right]$ through integration by parts,
and the Gaussian-specific fact that $\nabla_{\text{\vx}}\log p(\vx) = -\vSigma^{-1}(\vx-\vmu)$. 

The score identity implies: 
\[
\begin{rcases*}
\nabla_{\text{\vmu}} \myexpect \left[ f(\vx) \right] = \myexpect \left[ f(\vx) \nabla_{\text{\vmu}} \log p(\vx; \vmu,\vSigma)\right] \\
\nabla_{\text{\vmu}} \log p(\vx; \vmu,\vSigma) = \vSigma^{-1}(\vx-\vmu)
\end{rcases*} \implies \\
\nabla_{\text{\vmu}} \myexpect \left[ f(\vx) \right] = \vSigma^{-1}\myexpect \left[ (\vx-\vmu)f(\vx) \right].
\]
Applying the Stein identity with $t(\vx) = f(\vx)$, we obtain Bonnet's theorem 
\[
\nabla_{\text{\vmu}} \myexpect \left[ f(\vx) \right] = \vSigma^{-1} \vSigma \myexpect \left[ \nabla_{\text{\vx}} f(\vx) \right] = \myexpect \left[ \nabla_{\text{\vx}} f(\vx) \right]. 
\]
Writing $\vx = \vmu + \vSigma^{1/2} \vepsilon$ with $\vepsilon \sim \gauss(\vzero, \vI)$, since $\partial \vx / \partial \vmu = \vI$, we see that a popular case of the location-scale transform or ``reparameterization trick'' is the pathwise implementation of Bonnet. 
Similarly, Price's theorem, which states that for differentiable $f$ we have $\nabla_{\text{\vSigma}} \myexpect \left[ f(\vx) \right] = \frac{1}{2} \myexpect \left[ \nabla^2_{\text{\vx}} f(\vx) \right]$, follows by applying the same Stein identity to the components of $\nabla_{\text{\vx}} f(\vx)$ and using the score identity for derivatives with respect to $\vSigma$.

\paragraph{Gaussian Stein and Price's theorem.} Let $\vx \sim \gauss(\vmu, \vSigma)$ on $\real^D$ and let $f: \real^D \to \real$ be twice continuously differentiable with $\myexpect \left\| \nabla_{\text{\vx}} f(\vx)\right\| < \infty$ and $\myexpect \left\| \nabla_{\text{\vx}}^2 f(\vx)\right\|_{\mathrm{F}} < \infty$. The scalar Gaussian Stein identity states that for any sufficiently regular $t: \real^D \to \real$ and each coordinate $i=1,..., D$,
\[
  \myexpect \left[ ( \vx_i - \vmu_i) t(\vx)\right] = \sum_{\ell=1}^D \Sigma_{i\ell} \myexpect \left[ \partial_{x_\ell} t(\vx)\right].
\tag{Stein}
\]
\begin{proof}[Proof of Price]
To differentiate with respect to $\vSigma$, we use the matrix score identity for the Gaussian density:
\begin{equation}
  \nabla_{\text{\vSigma}} \log p( \vx; \vmu, \vSigma) =  -\tfrac12 \vSigma^{-1} + \tfrac12 \vSigma^{-1}(\vx-\vmu)(\vx-\vmu)^\top \vSigma^{-1},
\end{equation}
so, entrywise,
\begin{align}\label{eq:price-score}
  \frac{\partial}{\partial \vSigma_{ij}} \myexpect \left[ f(\vx) \right] 
  &=  \myexpect \left[ f(\vx) \frac{\partial}{\partial \text{\vSigma}_{ij}} \log p(\vx; \vmu,\vSigma) \right] \\
  &= -\tfrac12(\vSigma^{-1})_{ji} \myexpect \left[ f(\vx) \right]  + \tfrac12 \left(\vSigma^{-1} \myexpect \left[ (\vx-\vmu)(\vx-\vmu)^\top f(\vx)\right] \vSigma^{-1}\right)_{ji}.
\end{align}
For each fixed $j=1,..., D$, apply Stein with the scalar test function $t(\vx) := (\vx_j - \vmu_j) f(\vx)$. For each $i$,
\[
  \myexpect \left[ (\vx_i - \vmu_i)(\vx_j - \vmu_j) f(\vx) \right] = \sum_{\ell=1}^D \vSigma_{i\ell} \myexpect \left[ \partial_{\text{\vx}_\ell}\left((\vx_j-\vmu_j) f(\vx) \right)\right].
\]
Since
\[
  \partial_{\text{\vx}_\ell} \left((\vx_j - \vmu_j) f(\vx) \right) = \delta_{\ell j} f(\vx) + (\vx_j - \vmu_j) \partial_{\text{\vx}_\ell} f(\vx),
\]
where $\delta_{\ell j}$ is the Kronecker delta, we get
\begin{align}    
  \myexpect \left[ (\vx_i - \vmu_i)(\vx_j - \vmu_j) f(\vx) \right] 
  &=  \sum_{\ell} \vSigma_{i\ell} \delta_{\ell j} \myexpect \left[ f(\vx) \right] + \sum_{\ell} \vSigma_{i\ell} \myexpect \left[ (\vx_j - \vmu_j) \partial_{\text{\vx}_\ell} f(\vx) \right]\\
  &= \vSigma_{ij} \myexpect \left[ f(\vx) \right] + \sum_{\ell} \vSigma_{i\ell} \myexpect \left[ (\vx_j - \vmu_j) \partial_{\text{\vx}_\ell} f(\vx)\right].
\end{align}
Next, apply Stein again with the scalar test function $t(\vx) := \partial_{\text{\vx}_\ell} f(\vx)$. For each $\ell=1,..., D$,
\[
  \myexpect \left[ (\vx_j - \vmu_j) \partial_{\text{\vx}_\ell} f(\vx) \right]
  = \sum_{k=1}^D \vSigma_{jk}  \myexpect \left[ \partial_{\text{\vx}_k} \partial_{\text{\vx}_\ell} f(\vx)\right].
\]
Substituting into the previous expression yields
\[
  \myexpect \left[ (\vx_i - \vmu_i)(\vx_j - \vmu_j) f(\vx)\right] = \vSigma_{ij} \myexpect \left[ f(\vx) \right]
  + \sum_{\ell,k} \vSigma_{i\ell} \vSigma_{jk} \myexpect \left[ \partial_{\text{\vx}_k} \partial_{\text{\vx}_\ell} f(\vx)\right].
\]
In matrix notation this is
\begin{equation}\label{eq:price-middle}
  \myexpect \left[ (\vx-\vmu)(\vx-\vmu)^\top f(\vx)\right] = \vSigma \myexpect \left[ f(\vx) \right] + \vSigma \myexpect \left[ \nabla_{\text{\vx}}^2 f(\vx)\right] \vSigma.
\end{equation}
Multiply \eqref{eq:price-middle} on the left and right by $\vSigma^{-1}$:
\[
  \vSigma^{-1} \myexpect \left[ (\vx-\vmu)(\vx-\vmu)^\top f(\vx)\right] \vSigma^{-1} = \myexpect \left[ f(\vx) \right] \vSigma^{-1}  + \myexpect \left[ \nabla_{\text{\vx}}^2 f(\vx)\right].
\]
Substitute this back into \eqref{eq:price-score}. The $\vSigma^{-1}\myexpect \left[ f(\vx) \right]$ terms cancel, and we obtain
\[
  \frac{\partial}{\partial \text{\vSigma}_{ij}} \myexpect \left[ f(\vx) \right] = \tfrac12 \myexpect \left[ \partial_{\text{\vx}_i}\partial_{\text{\vx}_j} f(\vx)\right].
\]
Equivalently, in matrix form,
\[
  \nabla_{\text{\vSigma}} \myexpect \left[ f(\vx) \right]  = \tfrac12 \myexpect \left[ \nabla_{\text{\vx}}^2 f(\vx)\right],
\]
which is Price's theorem.
\end{proof}

\begin{remark}[Symmetry in $\vSigma$]
In the proof of Price's theorem we differentiate with respect to the entries $\vSigma_{ij}$ treating $\vSigma$ as an unconstrained matrix. The right-hand side of the final expression for the gradient is symmetric because the Hessian $\nabla_{\text{\vx}}^2 f(\vx)$ is symmetric for $C^2$ functions $f$. Thus the gradient naturally lies in the
space of symmetric matrices, and the formula is consistent with the covariance constraint $\vSigma = \vSigma^\top$.
\end{remark}

\section{Elliptical Laws}\label{app:elliptical}


\paragraph{Generalized Pearson Type II. } 
An important subclass of elliptical distributions are the Pearson Type II distributions. 
A fundamental property characterizing  these distributions, is the stochastic representation of a $D$-dimensional spherical Pearson Type II random vector $\vz$ as $\vz \stackrel{d}{=} r \vu$, where  $\vu$ has uniform distribution on the unit sphere surface in $\real^D$, and the sign $d$ indicates the same distribution. The random variables $r \stackrel{d}{=} \| \vz \|$ and $\vu \stackrel{d}{=} \frac{\text{\vz}}{\| \text{\vz} \|}$ are independent and therefore
the distribution of $\vz$ is fully determined by that of its squared length $r^2 =  \vz^\top \vz$. The random variables $r$ and
\[
s(\vx) = r^2 = (\vx - \vmu)^\top \vSigma^{-1}(\vx-\vmu)
\]
are called the radial and squared radial parts of the random vector $\vx$. Their distributions are defined by the specific Pearson Type II density generator function in $\eqref{eq:genPearsonII}$, as
\begin{align*}
f_r(r) = \frac{2 \pi^{D/2}}{\Gamma(D/2)} r^{D-1} g(r^2),
\quad r > 0, \qquad
f_s(s)  = \frac{\pi^{D/2}}{\Gamma(D/2)} s^{D/2-1} g(s).
\end{align*}
For further details see \cite{johnson1987multivariate, fang1990symmetric,  rezaei2018inferences, arellano2001some}.


\begin{proof}[Proof of Lemma~\ref{lem:q-gaussian-pearsonII}]

Expressing the exponent $m$ of the Pearson type II density in terms of the entropic index $q < 1$ of the $q$-Gaussian density as $m=1/1-q$, and choosing the support radius $R=R(D, q)$ to be 
\begin{equation}\label{eq:R2q}
R^2 =\left[
  \frac{\Gamma \left(\frac{D}{2}+\frac{2-q}{1-q}\right)}
       {\pi^{D/2} \Gamma \left(\frac{2-q}{1-q}\right)}
  \left(\frac{2}{1-q}\right)^{\frac{1}{1-q}} 
  \right]^{\frac{2(1-q)}{2+D(1-q)}}  , 
\end{equation}
the density generator of the Pearson type II distribution, given in $\eqref{eq:genPearsonII}$, takes the form
\begin{equation}
g(s) = \left(\frac{1-q}{2} \left(R^2-s \right)\right)^{1/(1-q)}_+ .
\end{equation}
The corresponding Pearson type~II density is given by
\begin{align*}
p(\vx) &=  |\vSigma|^{-1/2} \left[ \frac{1-q}{2} 
\left(R^2-(\vx - \vmu)^\top \vSigma^{-1}(\vx-\vmu) \right)\right]^{1/(1-q)} q < 1, (\vx - \vmu)^\top \vSigma^{-1}(\vx-\vmu) < R^2.
\end{align*}
This can be equivalently expressed in the form of the $q$-Gaussian density as
\begin{align*}
p(\vx) &= \left[1+(1-q)\left( -\frac{1}{1-q} 
+\frac{|\vSigma|^{-(1-q)/2}}{2}\left( 
R^2-(\vx - \vmu)^\top \vSigma^{-1}(\vx-\vmu)\right)\right) \right]_+^{1/(1-q)} \\
&= \exp_{q} \left[-\frac{1}{1-q} +\frac{|\vSigma|^{-(1-q)/2}}{2}\left( 
R^2-(\vx - \vmu)^\top \vSigma^{-1}(\vx-\vmu)\right)
\right].
\end{align*}

Introducing for convenience a transformation of the scale parameter $\tSigma = |\vSigma|^{\frac{1-q}{2}} \vSigma$ and using the fact that $|\vSigma| = |\tSigma|^{\frac{2}{(1-q)D+2}}$, \citet[Def.~15]{martins2022sparse} define the density parametrized in terms of $\vmu$ and $\tSigma$ as
\[ \vx \sim N_{q} (\vmu , \tSigma) \Rightarrow  
p(\vx) = \exp_{q} \left[-\frac{1}{1-q} +\frac{R^2 |\tSigma|^{-\frac{1-q}{D(1-q)+2}}}{2} -\frac{(\vx - \vmu)^\top \tSigma^{-1}(\vx-\vmu)}{2} \right]. 
\]
\citet{johnson1987multivariate} provides a detailed discussion of the effect of the shape parameter $q$. For $q \to 1$, the radius $R \to \infty$, $\exp_{q}$ converges to the ordinary exponential, and we recover the multivariate normal $N (\vmu , \vSigma)$.
\end{proof}

\paragraph{Associated laws as escort distributions.} Using the density generator $g(\cdot)$ of the bounded-support $q$-Gaussian, given in \eqref{eq:gen-q}, the cumulative generator function is defined as the integral of the original generator:
\begin{align*}
G(s) &: =\int_{s}^{R^2} g(t) dt 
= \left(\tfrac{1-q}{2}\right)^{\frac{1}{1-q}} \frac{1-q}{2-q} (R^2-s)^{\frac{2-q}{1-q}}
\propto (R^2-s)^{m+1},\quad m:= \frac{1}{1-q}, 
\end{align*}
which induces the first associated law with density
\begin{align*}
p^\star(\vx)
&=\frac{|\vSigma|^{-1/2} G \left(s(\vx) \right)}
{\displaystyle \frac{\pi^{D/2}}{\Gamma(\frac D2)} \int_{0}^{R^2} s^{\frac D2-1} G(s) ds}
  \propto   |\vSigma|^{-1/2} \left(R^2-s(\vx) \right)^{m+1}_+, \\
\end{align*}
where $s(\vx):=(\vx-\vmu)^\top \vSigma^{-1}(\vx-\vmu)$. By construction, $p$ and $p^\star$ share the same location-scale parameters $(\vmu,\vSigma)$ and the same bounded support $\mathrm{supp} = \{\vx: s(\vx)<R^2\}$; the associated law is a new Pearson Type~II distribution on the same support set, with  exponent increased by $1$ as a consequence of the power-law Pearson II form.

We can equivalently express it as normalized reweighting of the base law $p(\vx)$:
\begin{equation}\label{eq:reweight}
p^\star(\vx)  =  \frac{\left(R^2-s(\vx) \right) p(\vx)}{\myexpect_{p} \left[ R^2-s(\vx) \right]},
\end{equation}

\paragraph{Escort interpretation of the associated laws.}
The first associated law coincides with the $(2-q)$-escort of the base density. Indeed,
\[
p^\star(\vx) \propto\ (R^2-s(\vx))^{m+1}_+ = \left((R^2-s(\vx))^{m}_+\right)^{2-q} \propto p(\vx)^{ 2-q},
\]
since $m+1 = \frac{2-q}{1-q} = (2-q)m$. More generally, the $k$-th associated law satisfies
\[
p^{\langle k\rangle}(\vx)\ \propto\ (R^2-s(\vx))^{m+k}_+ \propto  p(\vx)^{ 1+k(1-q)}, \qquad k=0,1,2,\dots,
\]
so the escort order increases by $(1-q)$ at each step. All these laws share the same bounded support and are progressively more centrally concentrated, with a sharper peak at $\vmu$ and vanishing faster near the boundary than $p(\vx)$.

\section{Proof of the bounded-support Stein identity}\label{app:stein-proof}

We prove the bounded-support Stein for a general test function $t(\vx)$ rather than the more specific $f(\vx)$ which we use for simplicity in the main statement of Theorem~\ref{thm:stein-bounded}. We assume $p$ is a bounded-support $q$-Gaussian density with parameters $(\vmu, \vSigma,q)$ as in Lemma~\ref{lem:q-gaussian-pearsonII}, and $p^\star$ is its first associated (cumulative-generator) law. For any almost everywhere differentiable $t:\real^D \to \real$ with $\myexpect_{p^\star} \left\| \nabla t(\vx) \right\| < \infty$,
\begin{equation}\label{eq:stein-pearsonII-expect-2}
\myexpect_{p} \left[ (\vx-\vmu) t(\vx) \right] 
= \Cov_{p}(\vx) \myexpect_{p^\star} \left[ \nabla t(\vx) \right]
= \frac{\myexpect_{p} \left[ r^2 \right]}{D} \vSigma \myexpect_{p^\star} \left[ \nabla t(\vx) \right],
\end{equation}
where $r^2 = s(\vx) = (\vx-\vmu)^\top \vSigma^{-1}(\vx-\vmu)$ and $\myexpect_{p} \left[ r^2 \right]$ are given in Lemma~\ref{lem:radial-moments-associated}. 

\begin{proof}[Proof of Theorem~\ref{thm:stein-bounded}]
Let $\vz = \vSigma^{-1/2}(\vx-\vmu)$, so that
\begin{align}\label{eq:escort-in-z}
p(\vz) \propto\ (R^2-\| \vz \|^2)^m \mathbbm{1}_{\| \vz \| < R}, \qquad
p^\star(\vz)  = \frac{(R^2-\| \vz \|^2) p(\vz)}{\myexpect_{p} \left[ R^2-\| \vz \|^2 \right]} \propto\ (R^2-\| \vz \|^2)^{m+1} \mathbbm{1}_{\| \vz \| < R},
\end{align}
and define $T(\vz):= t(\vmu + \vSigma^{1/2} \vz)$. To compute $\myexpect_{p^\star} \left[ \partial_{\text{\vz}_i} T(\vz) \right]$ first we integrate out $\vz_{i}$. Fix an index $i$. For each $\vz_{-i} \in \real^{D-1}$ with $\| \vz_{-i} \|<R$, set
\[
\rho(\vz_{-i}) := \sqrt{R^2-\| \vz_{-i} \|^2} \in (0,R],
\]
so that the admissible range for $\vz_i$ is exactly
\[
\vz_i\in I( \vz_{-i}) := [-\rho(\vz_{-i}), \rho(\vz_{-i})],
\]
since $\| \vz \|^2  < R^2 \implies \| \vz_{-i} \|^2 + \vz_i^2  < R^2 \implies |\vz_i | < \rho(\vz_{-i}) $. For each fixed $\vz_{-i}$, we apply one-dimensional integration by parts in the variable $\vz_i$ on the interval $I(\vz_{-i})$:
\begin{align*}
& \int_{-\rho(\text{\vz}_{-i})}^{\rho(\text{\vz}_{-i})} \partial_{\text{\vz}_i}  T(\vz_{-i}, \vz_i) (R^2 - \| \vz_{-i} \|^2 - \vz_i^2)^{m+1} d \vz_i \\
& = \left[T(\vz_{-i}, \vz_i) (R^2-\| \vz_{-i} \|^2 - \vz_i^2)^{m+1} \right]_{\text{\vz}_i = -\rho(\text{\vz}_{-i})}^{\rho( \text{\vz}_{-i})} 
+ \int_{-\rho(\text{\vz}_{-i})}^{\rho(\text{\vz}_{-i})} T(\vz_{-i}, \vz_i) \partial_{\text{\vz}_i} (R^2 - \| \vz_{-i} \|^2 - \vz_i^2)^{m+1} d \vz_i.  
\end{align*}
Because $m+1>0$, the polynomial $(R^2 - \| \vz_{-i} \|^2 - \vz_i^2)^{m+1}$ vanishes at $\vz_i = \pm \rho(\vz_{-i})$, and the boundary term is exactly zero for each slice. Differentiating the polynomial term gives
\[
\partial_{\text{\vz}_i} (R^2-\| \vz_{-i} \|^2 - \vz_i^2)^{m+1} = -2 (m+1) \vz_i (R^2-\| \vz_{-i} \|^2 - \vz_i^2)^{m}.
\]
Hence,
\begin{align}\label{eq:inner}
\int_{-\rho(\text{\vz}_{-i})}^{\rho(\text{\vz}_{-i})} \partial_{\text{\vz}_i}  T(\vz_{-i}, \vz_i) (R^2-\| \vz_{-i} \|^2 - \vz_i^2)^{m+1} d \vz_i
= \int_{-\rho(\text{\vz}_{-i})}^{\rho(\text{\vz}_{-i})} T(\vz_{-i}, \vz_i) 2(m+1) \vz_i (R^2-\| \vz_{-i} \|^2 - \vz_i^2)^{m} d \vz_i.
\end{align}

Next, we integrate over all $\vz_{-i}$ with $\| \vz_{-i} \| < R$. The left-hand side of \eqref{eq:inner} becomes
\begin{align*}
\int_{\text{\real}^{D-1}} \int_{\text{\real}} \left(\partial_{\text{\vz}_i} T(\vz_{-i}, \vz_i) \right) (R^2 - \| \vz_{-i} \|^2 - \vz_i^2)^{m+1}_+ d \vz_i d\vz_{-i} &= \int_{\real^{D}} (\partial_{\text{\vz}_i} T(\vz)) (R^2 - \| \vz \|^2 )^{m+1}_+ d\vz \\
&= C^\star \myexpect_{p^\star} \left[ \partial_{\text{\vz}_i} T(\vz) \right], 
\end{align*}
where $C^\star$ is the normalizing constant of $p^\star$. Finally, the right-hand side of \eqref{eq:inner} becomes
\[
2(m+1) \int_{\| \text{\vz} \| < R} T(\vz) \vz_i (R^2-\| \vz \|^2)^m d\vz = 2(m+1) C \myexpect_{p} \left[ \vz_i T(\vz) \right],
\]
where $C$ is the normalizing constant of $p$. Hence
\begin{equation}\label{eq:ibp-core-explicit}
\myexpect_{p^\star} \left[ \partial_{\text{\vz}_i} T(\vz) \right] = \frac{2(m+1) C}{C^\star} \myexpect_{p} \left[ \vz_i T(\vz) \right].
\end{equation}
Using the relation
\[
\myexpect_{p} \left[ R^2-\|\vz\|^2 \right] = \frac{1}{C} \int (R^2-\|\vz\|^2) (R^2-\|\vz\|^2)^m d\vz = \frac{C^\star}{C},
\]
we get $\dfrac{C}{C^\star} = \dfrac{1}{\myexpect_{p} \left[ R^2-\|\vz\|^2 \right]}$, so \eqref{eq:ibp-core-explicit} simplifies to
\begin{equation}\label{eq:ibp-core}
\myexpect_{p^\star} \left[ \partial_{\text{\vz}_i} T(\vz) \right] = \frac{2(m+1)}{\myexpect_{p} \left[ R^2-\|\vz\|^2 \right]} \myexpect_{p} \left[ \vz_i T(\vz) \right].
\end{equation}

Vectorizing over $i$, we obtain
\[
\myexpect_{p^\star} \left[ \nabla_{\text{\vz}} T(\vz) \right] = \frac{2(m+1)}{\myexpect_{p} \left[ R^2-\|\vz\|^2 \right]} \myexpect_{p} \left[ \vz T(\vz) \right].
\]

\paragraph{Return to $\vx$-space.} Recall $\vz= \vSigma^{-1/2}(\vx-\vmu)$ and $\nabla_{\text{\vz}} T(\vz) = \vSigma^{1/2} \nabla_{\text{\vx}} t(\vmu + \vSigma^{1/2} \vz)$. Taking expectations and multiplying by $\vSigma^{1/2}$,
\[
\myexpect_{p} \left[ (\vx-\vmu) t(\vx) \right] = \vSigma^{1/2} \myexpect_{p} \left[ \vz T(\vz) \right]
= \frac{\myexpect_{p} \left[ R^2-\|\vz\|^2 \right]}{2(m+1)} \vSigma^{1/2} \myexpect_{p^\star} \left[ \nabla_{\text{\vz}} T(\vz) \right]
= \frac{\myexpect_{p} \left[ R^2-\|\vz\|^2 \right]}{2(m+1)}  \vSigma \myexpect_{p^\star} \left[ \nabla_{\text{\vx}} t(\vx) \right].
\]
From Lemma~\ref{lem:radial-moments-associated},
\[
\frac{\myexpect_{p} \left[ r^2 \right]}{D} = \frac{\myexpect_{p} \left[ R^2-r^2 \right]}{2(m+1)},
\]
so the coefficient equals $\frac{\myexpect_{p} \left[ r^2 \right]}{D}$ and we obtain \eqref{eq:stein-pearsonII-expect}. Using \eqref{eq:reweight} to express $\myexpect_{p^\star}$ in terms of $\myexpect_{p}$ yields \eqref{eq:final-p-only}.
\end{proof}

\section{Proofs of $q$-Bonnet, $q$-Price, and variance bounds}\label{app:q-bonnet-proof}

Similarly to Appendix~\ref{app:stein-proof} we specify in the following proofs the exact form of the test function $t(\vx)$ that we use, and note that in the main statements of the Theorems we use $f(\vx)$ instead for simplicity. 

\subsection{$q$-Bonnet}
\label{app:q-bonnet-proof-1}

Let $p$ be a bounded-support $q$-Gaussian density $N_q(\vmu,\vSigma)$ from Lemma~\ref{lem:q-gaussian-pearsonII}, with $q<1$ and $m := 1/(1-q) > 0$. Let $f:\real^D \to \real$ be $C^1$ on an open set containing $\{s(\vx) \le R^2\}$, where $s(\vx) := (\vx-\vmu)^\top \vSigma^{-1}(\vx-\vmu)$, and assume $\myexpect_{p} \left\| \nabla f(\vx)\right\| < \infty$. We show that

\[
\nabla_{\text{\vmu}} \myexpect_{p} \left[ f(\vx) \right] = \myexpect_{p} \left[ \nabla f(\vx) \right].
\]

\begin{proof}[Proof of Theorem~\ref{thm:q-bonnet}]
Since $p(\vx)\propto |\vSigma|^{-1/2}\left( R^2-s(\vx) \right)^m_+$, we have
\[
\log p(\vx) = -\tfrac12\log|\vSigma| + m \log \left( R^2-s(\vx) \right)+\text{const},\quad
\nabla_{\text{\vmu}} s(\vx) = -2 \vSigma^{-1}(\vx-\vmu).
\]
Hence
\[
\nabla_{\text{\vmu}}\log p(\vx)
= m \frac{\nabla_{\vmu}\left( R^2-s(\vx) \right)}{R^2-s(\vx)}
= \frac{2m}{R^2-s(\vx)} \vSigma^{-1}(\vx-\vmu).
\]
Differentiating under the integral is justified by dominated convergence on the bounded support together with $m>0$, so we obtain the score identity 
\begin{equation}\label{eq:score-id}
\nabla_{\text{\vmu}} \myexpect_{p} \left[ f(\vx)\right]
= \myexpect_{p} \left[ f(\vx) \nabla_{\text{\vmu}} \log p(\vx)\right]
= 2m \myexpect_{p} \left[ \frac{f(\vx)}{R^2 - s(\vx)} \vSigma^{-1} (\vx - \vmu) \right].
\end{equation}

Apply the $p$-only Stein identity \eqref{eq:final-p-only} with $t(\vx) = \dfrac{ f(\vx)}{ R^2-s(\vx)}$. Substituting into the right-hand side of \eqref{eq:final-p-only} by the quotient rule and using $\nabla s(\vx) = 2 \vSigma^{-1}(\vx - \vmu)$ gives
\[
(R^2-s(\vx)) \nabla \left(\frac{f(\vx)}{R^2-s(\vx)}\right) 
= \nabla f(\vx) + 2 \frac{f(\vx)}{R^2-s(\vx)} \vSigma^{-1}(\vx-\vmu).
\]
Hence \eqref{eq:final-p-only} yields
\begin{equation}\label{eq:steineq-fixpoint}
\vSigma \myexpect_{p} \left[ \frac{f(\vx)}{R^2-s(\vx)} \vSigma^{-1}(\vx - \vmu) \right]
= \frac{\myexpect_{p} \left[ r^2 \right]}{D} \vSigma
\frac{\myexpect_{p} \left[ \nabla f(\vx) \right] + 2 \myexpect_{p} \left[ \frac{f(\vx)}{R^2-s(\vx)} \vSigma^{-1}(\vx-\vmu) \right]}{\myexpect_{p} \left[ R^2-s(\vx) \right]}.
\end{equation}
From the moment formulas in Lemma~\ref{lem:radial-moments-associated}, we have
\[
\frac{\myexpect_{p} \left[ r^2 \right]}{D \myexpect_{p} \left[ R^2-s(\vx) \right]} = \frac{1}{2(m+1)}.
\]
Thus \eqref{eq:steineq-fixpoint} yields
\begin{align*}
 \myexpect_{p} \left[ \frac{f(\vx)}{R^2-s(\vx)} \vSigma^{-1}(\vx - \vmu) \right] 
 &= \frac{1}{2(m+1)} \left(\myexpect_{p} \left[ \nabla f(\vx) \right]
    + 2 \myexpect_{p} \left[ \frac{f(\vx)}{R^2-s(\vx)} \vSigma^{-1}(\vx - \vmu) \right] \right) \\
 &= \frac{1}{2m} \myexpect_{p} \left[ \nabla f(\vx) \right].
\end{align*}
Substitute this into \eqref{eq:score-id}:
\[
\nabla_{\text{\vmu}} \myexpect_{p} \left[ f(\vx) \right] = 2m  \myexpect_{p} \left[ \frac{f(\vx)}{R^2-s(\vx)} \vSigma^{-1}(\vx - \vmu) \right]
= \myexpect_{p} \left[ \nabla f(\vx) \right].
\]

\end{proof}

\subsection{$q$-Price}
\label{app:q-bonnet-proof-2}

\begin{lemma}[Matrix calculus]\label{lem:matrix-calculus}
Let $\vSigma \succ 0$. We use the Frobenius inner product $A:B := \mathrm{tr}(A^\top B)$. For $s(\vx)=(\vx-\vmu)^\top \vSigma^{-1}(\vx-\vmu)$ and the elementary matrix $\vE_{ij}$  that has 1 in the $(i,j)$ position and 0 elsewhere: 
\begin{align}
&\frac{\partial}{\partial \vSigma_{ij}}\log| \vSigma| = (\vSigma^{-1})_{ji}, \label{eq:dlogdet}\\
&\frac{\partial s(\vx)}{\partial \vSigma_{ij}}
= -\left(\vSigma^{-1}\vE_{ji} \vSigma^{-1}\right):\left( (\vx-\vmu) (\vx-\vmu)^\top \right)
= -\left(\vSigma^{-1}(\vx-\vmu)(\vx-\vmu)^\top \vSigma^{-1}\right)_{ji}.
\label{eq:dQ}
\end{align}
Moreover, for any matrix $\vH$,
\begin{align}
&\left(\vSigma^{-1} \vE_{ji}\vSigma^{-1}\right):\vSigma = (\vSigma^{-1})_{ji},\label{eq:contr1}\\
&\left(\vSigma^{-1} \vE_{ji}\vSigma^{-1}\right):(\vSigma \vH \vSigma) = \vH_{ji}. \label{eq:contr2}
\end{align}
\end{lemma}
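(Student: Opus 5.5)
The statement to prove is Lemma~\ref{lem:matrix-calculus}. The plan is to treat $\vSigma$ as an unconstrained matrix with independent entries $\vSigma_{ij}$, so that $\partial \vSigma/\partial \vSigma_{ij} = \vE_{ij}$. Every identity then follows from two standard differentials, $d\log|\vSigma| = \mathrm{tr}(\vSigma^{-1} d\vSigma)$ and $d(\vSigma^{-1}) = -\vSigma^{-1}(d\vSigma)\vSigma^{-1}$, together with the elementary trace identities $\mathrm{tr}(A\vE_{ij}) = A_{ji}$ and $A:B=\mathrm{tr}(A^\top B)$.

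\textbf{Step 1: the log-determinant.} For \eqref{eq:dlogdet}, substitute $d\vSigma=\vE_{ij}$ into the first differential:
\[
\partial_{\vSigma_{ij}}\log|\vSigma| = \mathrm{tr}(\vSigma^{-1}\vE_{ij}) = (\vSigma^{-1})_{ji}.
\]
The differential formula itself follows from Jacobi's formula, or from $\log|\vSigma+\epsilon \vE| = \log|\vSigma| + \log|\vI+\epsilon\vSigma^{-1}\vE|$ expanded to first order in $\epsilon$.

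\textbf{Step 2: the quadratic form.} For \eqref{eq:dQ}, set $\vA = (\vx-\vmu)(\vx-\vmu)^\top$ and write $s = \mathrm{tr}(\vSigma^{-1}\vA)$. The inverse differential gives
\[
\partial_{\vSigma_{ij}} s = -\mathrm{tr}(\vSigma^{-1}\vE_{ij}\vSigma^{-1}\vA).
\]
Two rewritings are needed. First, since $\vSigma$ and $\vA$ are symmetric, this trace equals $-(\vSigma^{-1}\vE_{ji}\vSigma^{-1}):\vA$. Second, by cyclicity, $\mathrm{tr}(\vE_{ij}\,\vSigma^{-1}\vA\vSigma^{-1}) = (\vSigma^{-1}\vA\vSigma^{-1})_{ji}$.

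\textbf{Step 3: the contractions.} Both reduce by cyclicity to a trace against $\vE$:
\[
(\vSigma^{-1}\vE_{ji}\vSigma^{-1}):\vSigma = \mathrm{tr}(\vSigma^{-1}\vE_{ij}\vSigma^{-1}\vSigma) = \mathrm{tr}(\vE_{ij}\vSigma^{-1}) = (\vSigma^{-1})_{ji}.
\]
Similarly, $(\vSigma^{-1}\vE_{ji}\vSigma^{-1}):(\vSigma\vH\vSigma) = \mathrm{tr}(\vSigma^{-1}\vE_{ij}\vSigma^{-1}\vSigma\vH\vSigma) = \mathrm{tr}(\vE_{ij}\vH) = \vH_{ji}$. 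In both cases the transpose in the Frobenius product is absorbed using the symmetry of $\vSigma$.

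\textbf{Main obstacle.} No step here is analytically hard; the risk is bookkeeping. The index order ($ji$ versus $ij$) and the transpose in the Frobenius product must be tracked consistently, since a slip there is the only way these identities go wrong. For \eqref{eq:contr2} with a non-symmetric $\vH$, the computation above shows that the $ji$ entry really is the right one, with no symmetrization needed. These conventions must match those used in \eqref{eq:price-score}, so that the $q$-Price proof can later combine them.
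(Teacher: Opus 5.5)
Your proposal is correct. The paper states this lemma without proof, as standard matrix calculus, and your derivation is exactly the routine verification it leaves implicit: Jacobi's formula, $d(\vSigma^{-1}) = -\vSigma^{-1}(d\vSigma)\vSigma^{-1}$, and $\mathrm{tr}(\vA\vE_{ij}) = \vA_{ji}$, with the entries of $\vSigma$ treated as independent. The index bookkeeping checks out in all four identities, only the symmetry of $\vSigma$ is needed to absorb the Frobenius transposes, and \eqref{eq:contr2} indeed holds for arbitrary, possibly non-symmetric, $\vH$.
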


As before, let $p$ be the bounded-support $q$-Gaussian density $N_q(\vmu,\vSigma)$ with $q<1$ and shape $m=\tfrac{1}{1-q}>0$, supported on $\{ s(\vx) < R^2 \}$. For the proof of Theorem~\ref{thm:q-price} we assume $f:\real^D \to \real$ is $C^2$ on an open set containing $\{ s(\vx) \le R^2\}$ and that
$\myexpect_p \| \nabla f(\vx) \| + \myexpect_p\| \nabla^2 f(\vx) \|_{F}~<~\infty$. We define $M$ and recall the form of the covariance
\[
M := \myexpect_p \left[R^2-s(\vx) \right],\qquad 
\Cov_p(\vx) = \frac{\myexpect_p[r^2]}{D} \vSigma.
\]
As usual, $p^\star$ denotes the first associated law of $p$, and so $\myexpect_{p^\star}[h] = \myexpect_p[(R^2-s)h]/M$ for any integrable $h$. Treating the entries $\vSigma_{ij}$ as independent parameters, the following identities hold for every $i,j \in \{1, \dots, D\}$:
\begin{align}\label{eq:q-price-associated}
\frac{\partial}{\partial \text{\vSigma}_{ij}} \myexpect_p \left[f(\vx)\right]
&= \frac{\myexpect_p[r^2]}{2D}
\mathbb E_{p^\star} \left[\frac{\partial^2 f(\vx)}{\partial x_i \partial x_j}\right],
\end{align}
Equivalently, in matrix form,
\begin{equation}\label{eq:q-price-matrix-1}
\nabla_{ \text{\vSigma} } \myexpect[f(\vx)]
= \frac{\myexpect_p[r^2]}{2D} \myexpect_{p^\star}[\nabla_{\text{\vx}}^2 f(\vx)]
\end{equation}
In the Gaussian limit $q \uparrow 1$ (so $m \to \infty$, $R \to \infty$), $p^\star = p$ and $\myexpect_p[r^2]/D\to 1$, so \eqref{eq:q-price-associated} reduces to the classical Price theorem
$\frac{\partial}{\partial \text{\vSigma}_{ij}}\myexpect[f(\vx)]=\tfrac12 \myexpect[\partial_{x_i}\partial_{x_j} f(\vx)]$.

\begin{proof}[Proof of Theorem~\ref{thm:q-price}]

\noindent \textbf{Score trick w.r.t. $\vSigma$. } By differentiation under the integral (justified by the assumptions and bounded support),
\[
\frac{\partial}{\partial \text{\vSigma}_{ij}} \myexpect_p[f(\vx)]
= \myexpect_p \left[f(\vx) \frac{\partial}{\partial \text{\vSigma}_{ij}} \log p(\vx)\right].
\]
For $p(\vx) \propto |\vSigma|^{-1/2} (R^2-s(\vx))^m_+$,
\[
\log p(\vx) = -\tfrac12\log |\vSigma| + m \log(R^2-s(\vx)) + \text{const.}
\]
Using Lemma~\ref{lem:matrix-calculus},
\[
\frac{\partial}{\partial \text{\vSigma}_{ij}} \log p(\vx) 
= -\tfrac12(\vSigma^{-1})_{ji} + \frac{m}{R^2 - s(\vx)}\left(\vSigma^{-1}(\vx-\vmu)(\vx-\vmu)^\top \vSigma^{-1}\right)_{ji}.
\]
Define the matrix
\[
B := \myexpect_p \left[\frac{(\vx-\vmu)(\vx-\vmu)^\top}{R^2-s(\vx)} f(\vx) \right].
\]
Then, in matrix form,
\begin{equation}\label{eq:score-form}
\nabla_{\text{\vSigma}} \myexpect_p[f(\vx)]
= -\tfrac12 \vSigma^{-1} \myexpect_p[f(\vx)] + m \vSigma^{-1} B \vSigma^{-1}.
\end{equation}

\noindent \textbf{Identify $B$ via the Stein identity. } Invoke the bounded-support Stein identity in its $p$-only form:
\[
\myexpect_p \left[ (\vx-\vmu) t(\vx)^\top \right]
=\frac{\myexpect_p [r^2]}{D} \vSigma
\frac{\myexpect_p \left[(R^2 - s(\vx)) \nabla t(\vx)^\top \right]}{M},
\qquad M := \myexpect_p[R^2 - s(\vx)].
\]
Choose the vector test function
\[
t(\vx) := \frac{f(\vx)}{R^2 - s(\vx)} \vSigma^{-1}(\vx - \vmu).
\]
\emph{Left-hand side.} Using the definition of $B$,
\[
\myexpect_p \left[(\vx - \vmu) t(\vx)^\top \right]
= \myexpect_p \left[\frac{(\vx-\vmu) (\vx-\vmu)^\top}{R^2-s(\vx)} f(\vx) \right] \vSigma^{-1}
= B \vSigma^{-1}.
\]
\emph{Right-hand side.} Compute 
\[
(R^2-s) \nabla t = \vSigma^{-1}(\vx-\vmu) \nabla f(\vx)^\top
+ 2 \vSigma^{-1}(\vx-\vmu)(\vx-\vmu)^\top \vSigma^{-1} \frac{f(\vx)}{R^2-s(\vx)}
+ f(\vx) \vSigma^{-1},
\]
using $\nabla s(\vx) = 2\vSigma^{-1}(\vx-\vmu)$ and the product rule. Taking $\myexpect_p$ and substituting into Stein’s identity yields
\[
B \vSigma^{-1} = c \left( \vSigma \myexpect_p[\nabla f(\vx) (\vx-\vmu)^\top] \vSigma^{-1}
+ 2 B \vSigma^{-1} + \myexpect_p[f(\vx)] \vI \right),
\qquad c:=\frac{\myexpect_p[r^2]}{D M}.
\]
Right-multiplying by $\vSigma$ and rearranging gives
\[
(1-2c) B = c \left(\vSigma \myexpect_p[\nabla f(\vx) (\vx-\vmu)^\top]
+ \myexpect_p[f(\vx)] \vSigma \right).
\]
For the bounded-support $q$-Gaussian, Lemma~\ref{lem:radial-moments-associated} gives $c=\myexpect_p[r^2]/(D M)=1/(2(m+1))$, hence
\begin{equation}\label{eq:B-solved}
B = \frac{1}{2m} \left(\vSigma \myexpect_p[\nabla f(\vx) (\vx-\vmu)^\top]
+ \myexpect_p[f(\vx)] \vSigma \right).
\end{equation}

\noindent \textbf{Insert $B$ back into the score form. } Substitute \eqref{eq:B-solved} into \eqref{eq:score-form}:
\[
\nabla_{ \text{\vSigma}} \myexpect_p[f(\vx)]
= -\tfrac12 \vSigma^{-1} \myexpect_p[f(\vx)]
+\frac{m}{2m} \left( \vSigma^{-1} \vSigma \myexpect_p[\nabla f(\vx) (\vx-\vmu)^\top] \vSigma^{-1}
+\vSigma^{-1} \myexpect_p[f(\vx)] \vSigma \vSigma^{-1} \right).
\]
The $\pm \tfrac12 \vSigma^{-1} \myexpect_p[f(\vx)]$ terms cancel, leaving
\begin{equation}\label{eq:first-order-form}
\nabla_{ \text{\vSigma}} \myexpect_p[f(\vx)]
= \frac12 \myexpect_p \left[(\vx-\vmu) \nabla f(\vx)^\top \right] \vSigma^{-1} .
\end{equation}

\noindent \textbf{Second Stein step to reach the Hessian. } Apply the (vector) Stein identity with the scalar tests $t_j(\vx) := \partial_{x_j} f(\vx)$, $j=1, \dots, D$, and stack:
\[
\myexpect_p \left[(\vx-\vmu) \nabla f(\vx)^\top\right]
= \frac{\myexpect_p[r^2]}{D} \vSigma 
\myexpect_{p^\star} \left[\nabla_{\text{\vx}}^2 f(\vx) \right].
\]
Substitute this into \eqref{eq:first-order-form} to obtain
\[
\nabla_{ \text{\vSigma}} \myexpect_p[f(\vx)]
= \frac{\myexpect_p[r^2]}{2D} \myexpect_{p^\star} \left[\nabla_{\text{\vx}}^2 f(\vx)\right],
\]
which is \eqref{eq:q-price-matrix}. 

\end{proof}

\begin{remark}[Symmetry in $\vSigma$]
We differentiated with respect to the entries $\vSigma_{ij}$ treating $\vSigma$ as an unconstrained matrix. The right-hand side of \eqref{eq:q-price-matrix} is symmetric because
$\nabla_{\text{\vx}}^2 f(\vx)$ is symmetric for $C^2$ functions $f$. Hence the gradient
$\nabla_{\text{\vSigma}} \myexpect_p[f(\vx)]$ naturally lies in the space of symmetric matrices and is consistent with the constraint $\vSigma = \vSigma^\top$.
\end{remark}

\subsection{Variance bounds}
\label{app:q-bonnet-proof-3}

\begin{proof}[Proof of Proposition~\ref{prop:bv}]
It is straightforward to verify unbiasedness: $\myexpect \left[ \widehat \vg \right] = \myexpect_{p^\star} \left[ \nabla t(\vx) \right]$ and $\myexpect \left[ \widehat \vH \right] = \myexpect_{p^{\star}} \left[ \nabla^2 f(\vx) \right]$. 

Note that $\widehat g_j$ is scalar, and let
\[
Y_k := \frac{(R^2-s(\vx_k)) \partial_j t(\vx_k)}{\myexpect_{p} \left[ R^2-s(\vx) \right]}.
\]
Since $0 < R^2-s(\vx) \le R^2$ and $|\partial_j t(\vx)| \le \left\| \nabla t(\vx)\right\| \le C_1$, hence
\[
|Y_k| \le \frac{R^2 C_1}{\myexpect_{p} \left[ R^2 - s(\vx) \right]} =: B_g.
\]
Therefore $Y_k \in [-B_g,B_g]$ almost surely, and by Popoviciu's inequality on bounded-range variances,
\[
\Var(Y_k) \le B_g^2.
\]
Since $\widehat g_j = \frac1S \sum_{k=1}^n Y_k$ with iid $Y_k$,
\[
\Var(\widehat g_j) = \frac{\Var(Y_1)}{S} \le \frac{B_g^2}{S}
= \frac{1}{S} \left( \frac{R^2 C_1}{M} \right)^2.
\]

Next, fix $(i,j)$ and define
\[
Z_k := \frac{(R^2-s(\vx_k)) (\nabla^2 f(\vx_k))_{ij}}{M}.
\]
Since $0 < (R^2-s(\vx)) \le R^2$ and $|(\nabla^2 f(\vx))_{ij}| \le \|\nabla^2 f(\vx)\|_{\mathrm{op}} \le C_2$, hence
\[
|Z_k| \le \frac{R^2 C_2}{M} =: B_H,
\]
so $Z_k \in [-B_H,B_H]$ almost surely, and again by Popoviciu, $\Var(Z_k) \le B_H^2$, and
\[
\Var \left((\widehat H)_{ij} \right)
=\Var \left(\frac1S\sum_{k=1}^n Z_k \right)
=\frac{\Var(Z_1)}{S} \le \frac{B_H^2}{S}
= \frac{(R^2 C_2)^2}{S M^2}.
\]

Summing entrywise variances gives
\[
\myexpect \left[ \left\| \widehat \vH - \myexpect \left[ \widehat \vH \right] \right\|_F^2 \right]
= \sum_{i,j} \Var\left((\widehat \vH)_{ij}\right) 
\le \frac{D^2 B_H^2}{S}.
\]
Since $\|\cdot\|_{\mathrm{op}} \le \|\cdot\|_F$, the same bound implies
\[
\myexpect \left[ \left\| \widehat \vH - \myexpect \left[ \widehat \vH \right] \right\|_{\mathrm{op}} \right] 
\le \frac{D B_H}{\sqrt{S}}.
\]

A sharper dimension dependence follows from matrix Hoeffding: defining centered
\[
\vA_k := \frac{(R^2-s(\vx_k)) \nabla^2 f(\vx_k)}{M}-\myexpect \left[ \widehat \vH \right]
\]
implies $\|\vA_k\|_{\mathrm{op}}\le 2 B_H$. Hence, since
$\frac{1}{S} \sum_{k=1}^S \vA_k = \widehat \vH-\myexpect \left[ \widehat \vH \right]$, we have 
\[
\Pr \left(\left\|\widehat \vH - \myexpect \left[ \widehat \vH \right] \right\|_{\mathrm{op}}\ge t \right)
 \le 2D \exp \left( -\frac{S t^2}{8 (2B_H)^2}\right),
\]
which integrates to 
$\myexpect \left[ \left\| \widehat \vH - \myexpect \left[ \widehat \vH \right] \right\|_{\mathrm{op}} \right] 
\le C_3 B_H \sqrt{\frac{\log(2D)}{S}}$
for a universal $C_3$: by the same tail-integration argument as above, and splitting at $t_0>0$,
\begin{align*}
\myexpect \left[ \left\|\widehat \vH - \myexpect \left[ \widehat \vH \right] \right\|_{\mathrm{op}} \right]
= \int_{0}^{\infty}\Pr( \left\|\widehat \vH - \myexpect \left[ \widehat \vH \right] \right\|_{\mathrm{op}} \ge t) dt
\le   t_0 +  \int_{t_0}^{\infty} 2D \exp  \left(-\frac{S t^2}{8(2B_H)^2}\right) dt.
\end{align*}
With $a := \dfrac{S}{8(2B_H)^2}$ and the Gaussian tail bound 
$\int_{x}^{\infty}e^{-a t^2}dt \le \dfrac{1}{2a x}e^{-a x^2}$,
\[
\myexpect \left[ \left\|\widehat \vH - \myexpect \left[ \widehat \vH \right] \right\|_{\mathrm{op}} \right] \le t_0 + 2D \cdot \frac{1}{2a t_0} e^{-a t_0^2}.
\]
Choose $t_0:=\sqrt{\dfrac{\log(2D)}{a}}$ so that $e^{-a t_0^2}=1/(2D)$. Then
\[
\myexpect \left[ \left\|\widehat \vH - \myexpect \left[ \widehat \vH \right] \right\|_{\mathrm{op}} \right] \le \sqrt{\frac{\log(2D)}{a}} + \frac{1}{2\sqrt{a}}\frac{1}{\sqrt{\log(2D)}} 
\le C_4 \frac{1}{\sqrt{a}}\sqrt{\log(2D)}
= C_3 B_H \sqrt{\frac{\log(2D)}{S}},
\]
for universal constants $C_4,C_3>0$, as claimed.
\end{proof}

\end{document}